\documentclass{article}
\usepackage{iclr2026_conference,times}

\usepackage{microtype}
\usepackage{graphicx}
\usepackage{subcaption}

\usepackage{hyperref}
\usepackage{url}

\newcommand{\appref}[1]{Appendix~\ref{#1}}

\usepackage{amsmath}
\usepackage{amssymb}
\usepackage{mathtools}
\usepackage{amsthm}
\usepackage{multirow}
\usepackage{diagbox}
\usepackage{makecell}

\usepackage{soul}
\usepackage{booktabs}
\usepackage{latexsym}
\usepackage[T1]{fontenc}

\usepackage{nicefrac}       
\usepackage{newunicodechar}
\usepackage{xcolor}         
\usepackage{colortbl}
\usepackage{tcolorbox}
\tcbuselibrary{breakable}
\usepackage{enumitem}
\usepackage{fontawesome5}

\definecolor{csfail}{HTML}{C0392B}
\definecolor{csok}{HTML}{1E8449}
\definecolor{csobsc}{HTML}{7F8C8D}
\definecolor{csactbg}{HTML}{EEF2F7}
\definecolor{csactfg}{HTML}{1A2A3A}
\definecolor{cspivotc}{HTML}{E67E22}

\newtcolorbox{cstraj}[2]{%
    breakable,
    colback=white, colframe=#1,
    boxrule=0.9pt, arc=2.5pt, left=6pt, right=6pt, top=4pt, bottom=4pt,
    fonttitle=\bfseries\small, coltitle=white, colbacktitle=#1,
    title={#2}
}

\newcommand{\csobs}[1]{{\footnotesize\color{csobsc}\textsf{obs:}~#1}\par}
\newcommand{\csthink}[1]{{\small\textsf{think:}~``#1''}\par}
\newcommand{\csact}[1]{\smallskip\noindent\colorbox{csactbg}{\color{csactfg}\small\ttfamily #1}\par\smallskip}
\newcommand{\csnote}[2]{\noindent\hspace{1.2em}{\small\color{#1}\faArrowRight~\itshape #2}\par}
\newcommand{\csstep}[2][]{\smallskip\noindent{\small\bfseries S#2}~#1~}
\providecommand{\say}[1]{``#1''}

\usepackage{upquote}
\usepackage{listings}
\lstdefinestyle{promptstyle}{%
    basicstyle=\fontfamily{lmtt}\selectfont\scriptsize,
    columns=fullflexible, keepspaces=true,
    breaklines=true, breakatwhitespace=false,
    breakindent=0pt, breakautoindent=false, postbreak=\mbox{},
    upquote=true,
    aboveskip=2pt, belowskip=0pt, xleftmargin=0pt,
}
\definecolor{cspromptc}{HTML}{34495E}
\newtcolorbox{promptcard}[1]{%
    breakable,
    colback=white, colframe=cspromptc,
    boxrule=0.9pt, arc=2.5pt, left=6pt, right=6pt, top=4pt, bottom=4pt,
    fonttitle=\bfseries\small, coltitle=white, colbacktitle=cspromptc,
    title={#1}
}
\newcommand{\promptbox}[2]{%
    \begin{promptcard}{#1}
    \lstinputlisting[style=promptstyle]{#2}
    \end{promptcard}
}

\usepackage[capitalize,noabbrev]{cleveref}

\usepackage{aliascnt}
\theoremstyle{plain}
\newtheorem{theorem}{Theorem}[section]

\newaliascnt{proposition}{theorem}
\newtheorem{proposition}[proposition]{Proposition}
\aliascntresetthe{proposition}
\newaliascnt{lemma}{theorem}
\newtheorem{lemma}[lemma]{Lemma}
\aliascntresetthe{lemma}
\newaliascnt{corollary}{theorem}
\newtheorem{corollary}[corollary]{Corollary}
\aliascntresetthe{corollary}

\theoremstyle{definition}
\newaliascnt{definition}{theorem}

\aliascntresetthe{definition}
\newaliascnt{assumption}{theorem}
\newtheorem{assumption}[assumption]{Assumption}
\aliascntresetthe{assumption}

\theoremstyle{remark}
\newaliascnt{remark}{theorem}
\newtheorem{remark}[remark]{Remark}
\aliascntresetthe{remark}

\crefname{proposition}{Proposition}{Propositions}
\crefname{lemma}{Lemma}{Lemmas}
\crefname{corollary}{Corollary}{Corollaries}
\crefname{definition}{Definition}{Definitions}
\crefname{assumption}{Assumption}{Assumptions}
\crefname{remark}{Remark}{Remarks}

\usepackage[disable]{todonotes}

\iclrfinalcopy

\newcommand{\Envelope}{\raisebox{0.4pt}{\scriptsize\faIcon[regular]{envelope}}}

\usepackage{xspace}
\newcommand{\method}{CAST\xspace}

\title{CAST: Game Solvers as Turn-Level Teachers for LLM Agents}

\author{%
Yu Wang\textsuperscript{1,4*}, 
Yi-Kai Zhang\textsuperscript{2,4*}, 
Wentao Shi\textsuperscript{1\ \Envelope},
Ziang Ye\textsuperscript{1,4}, 
Yuchun Miao\textsuperscript{3,4}, 
Yueqing Sun\textsuperscript{4} \\[2pt]
\bfseries
Qi Gu\textsuperscript{4\ \Envelope},
Xunliang Cai\textsuperscript{4}, 
Lan-Zhe Guo\textsuperscript{2}, 
Han-Jia Ye\textsuperscript{2}, 
Fuli Feng\textsuperscript{1} \\[6pt]
\textsuperscript{1}University of Science and Technology of China\quad
\textsuperscript{2}Nanjing University\quad
\textsuperscript{3}Wuhan University\\
\textsuperscript{4}Meituan, China \\[4pt]
}

\begin{document}

\AddToShipoutPictureFG*{%
    \AtTextUpperLeft{%
        \makebox[\textwidth][r]{%
        \raisebox{6mm}{%
            \includegraphics[height=0.8cm]{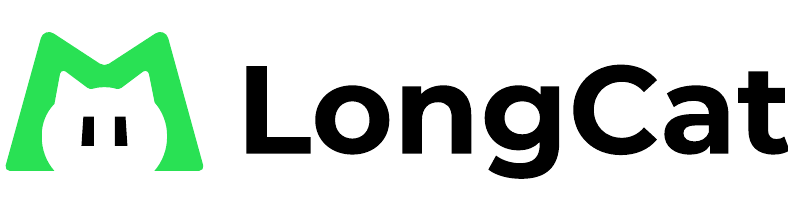}%
        }%
        }%
    }%
}

\maketitle

\lhead{CAST: Game Solvers as Turn-Level Teachers for LLM Agents}

\rhead{\ifnum\value{page}>1
    \raisebox{-2.5pt}[0pt][0pt]{\includegraphics[height=0.5cm]{figures/logos/longcat.pdf}}%
\fi}

\begingroup
    \renewcommand{\thefootnote}{*}
    \footnotetext{Equal contribution.}
\endgroup
\begingroup
    \renewcommand{\thefootnote}{\Envelope}
    \footnotetext{Corresponding authors: \texttt{guqi03@meituan.com} and \texttt{shiwentao123@mail.ustc.edu.cn}.}
\endgroup
\renewcommand{\thefootnote}{\arabic{footnote}}

\begin{abstract}
Training large language models (LLMs) to act in long-horizon games is a promising step toward generalist decision-making, yet reinforcement learning with verifiable rewards (RLVR) relies on sparse final rewards that reveal little about which decisions determine success. Denser process signals could supply this missing turn-level credit, but existing sources are hard to keep both cheap and accurate. We observe that changes in a game solver's state value reveal whether an action advances the state toward success. Building on this insight, we propose \textbf{CAST} (\textbf{C}redit \textbf{A}ssignment from \textbf{S}olver \textbf{T}eachers), which converts these value changes into solver advantages and injects them into RLVR as turn-level signals. We further show that, under a soft-optimal solver assumption, maximizing the solver advantage is equivalent to on-policy distillation from the solver, requiring only scalar values rather than teacher logits. Across Sokoban, Minesweeper, and Rush Hour, CAST outperforms all trained baselines on every game under both in-domain and unseen-difficulty evaluation and achieves the highest average zero-shot performance on ALFWorld and WebShop. Our code is available at \begingroup\hypersetup{pdfborder={0 0 0}}\href{https://github.com/Wloner0809/CAST}{\faGithub~\texttt{github.com/Wloner0809/CAST}}\endgroup.
\end{abstract}

\section{Introduction}
\label{sec:introduction}

As large language models (LLMs) and multimodal foundation models~\citep{Qwen3, longcat2601} move from passive generation toward active decision-making, a central goal is to build generalist agents that can make decisions in embodied and open-world environments~\citep{pi05}. Such agents need to act in evolving states, where the effects of early decisions may emerge only later and are often difficult to reverse~\citep{ye2026look}. These settings require a broad range of capabilities, including long-horizon planning, goal-directed exploration, and recovery from mistakes. Classical games such as Sokoban and Minesweeper offer an ideal testbed for these capabilities~\citep{junghanns2001sokoban, mnih2015human, lmgame-Bench}, with well-defined rules, verifiable feedback, and scalable decision-making environments. In fact, such games can already be solved efficiently by domain-specific solvers, ranging from heuristic search and dynamic programming to specialized DQN value networks~\citep{DQN}. General-purpose LLMs, by contrast, remain unreliable even in these structured environments, as shown in \autoref{fig:teaser} (left). Games therefore provide a bridge for studying how a general-purpose LLM can become a reliable interactive agent\citep{GameWorld, BALROG}.

A central bottleneck in training generalist LLM game agents is the lack of scalable, fine-grained learning signals. Reinforcement Learning with Verifiable Rewards (RLVR) has been effective in single-turn tasks~\citep{V0, chen2026learning, DeepSeek-R1}, but in games, final rewards are often too sparse. As illustrated in \autoref{fig:teaser} (right), such outcome signals cannot reveal which decisions in a trajectory cause the final win or loss, creating a fundamental credit-assignment challenge. To address this challenge, existing approaches obtain denser process signals through costly search~\citep{RAP}, learned process reward models~\citep{Agentprm}, or cross-trajectory comparisons such as GiGPO~\citep{GiGPO}, yet still face trade-offs among computation, supervision, and signal reliability. We observe that game solvers can provide this missing signal, as a solver can evaluate an action from the state at which it was taken. SFT on solver-generated expert trajectories, however, does not fully exploit this capability: it exposes the LLM only to expert-visited states and offers little guidance once the model deviates during interaction. This motivates an on-policy RL paradigm in which the LLM explores under its own policy and the solver acts as a \emph{turn-level teacher}.

Aligning the model's on-policy exploration with immediate solver feedback is, in essence, a form of on-policy distillation (OPD). Existing LLM distillation methods~\citep{OPD, Minillm}, however, typically rely on teacher logits over the full token space, whereas classical solvers return only an optimal action or a scalar cost-to-go, not such a distribution. We therefore propose \textbf{CAST} (\textbf{C}redit \textbf{A}ssignment from \textbf{S}olver \textbf{T}eachers), a distillation strategy built directly on solvers. Since a solver can complete a game from an intermediate state, it assigns every state a \textit{state value}: how close that state is to winning. For each action sampled by the LLM, CAST compares the solver values immediately before and after the action to obtain a solver advantage, and injects it into RLVR as turn-level signals. This converts the outcome reward into process supervision indicating whether the current action moves the state closer to success, at negligible training overhead. Our theoretical analysis shows that, for a sufficiently strong solver, an action's log-probability under the solver's implicit action distribution is strictly proportional to its one-step change in state value. Consequently, increasing the likelihood of actions with larger solver advantages is mathematically equivalent to on-policy distillation that aligns the LLM's action distribution with the solver's. This equivalence resolves the obstacle above: the scalar advantage already carries the teacher's action preference, so a single scalar suffices for distillation without the teacher's full action distribution, yielding an efficient logit-free distillation.

\begin{figure*}[t]
    \centering
    \includegraphics[width=\textwidth]{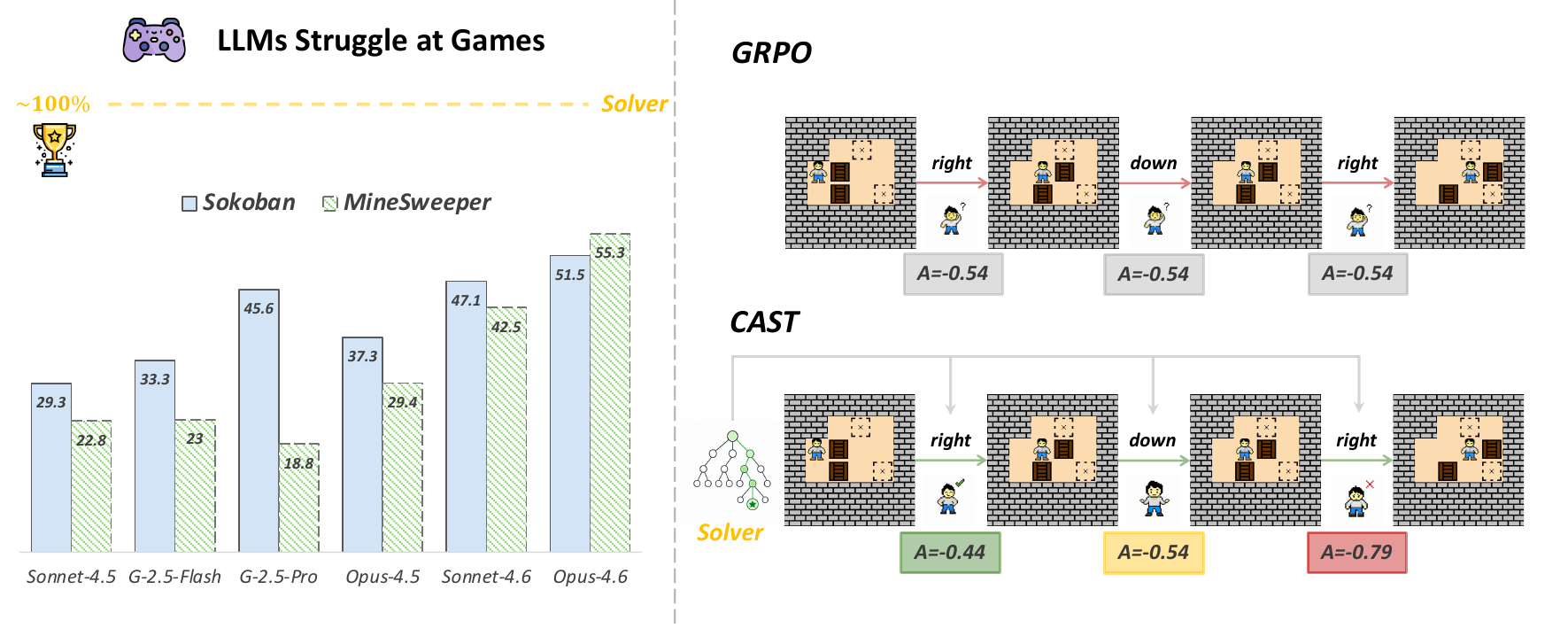}
    \caption{\textbf{Overview.} \textbf{(Left)} Several current closed-source LLMs struggle on classical games. \textbf{(Right)} Outcome-only RLVR lacks fine-grained feedback, whereas our solver-derived signal provides accurate turn-level credit.}
    \label{fig:teaser}
\end{figure*}

In practice, the absolute scale of solver signals varies across games and often contains extreme values that destabilize training. We address this with two lightweight shaping steps. First, an \texttt{asinh} transformation smoothly compresses extreme values while preserving resolution in the small-signal regime; a theoretical analysis shows this amounts to a robustified KL constraint. Second, batch-level root-mean-square (RMS) normalization removes cross-domain scale differences. These let the solver advantage serve as a stable, comparable process signal that integrates seamlessly into RLVR.

We evaluate \method on Sokoban, Minesweeper, and Rush Hour, which span complementary challenges in long-horizon planning, partial-observation inference, and constrained combinatorial search. \method achieves the best performance among all trained methods across every game under both in-domain and unseen-difficulty settings, consistently outperforming outcome-only RLVR baselines and the process-level baseline GiGPO~\citep{GiGPO}. It also reaches DAPO's~\citep{DAPO} peak validation performance in substantially fewer training steps and transfers zero-shot to held-out ALFWorld~\citep{Alfworld} and WebShop~\citep{Webshop} domains without further fine-tuning. Ablations validate the solver-advantage weighting and signal transformations; further analyses show negligible solver overhead and that a learned value network retains much of the benefit of exact solver guidance.

In summary, our contributions are as follows:
\begin{itemize}[noitemsep,topsep=0pt,leftmargin=*]
    \item \textbf{Solver-derived credit.} We cast game solvers as turn-level teachers and define a solver advantage that assigns fine-grained credit to LLM-sampled actions in RLVR.
    \item \textbf{Logit-free distillation.} We prove that maximizing this advantage is equivalent to on-policy distillation, giving a distillation objective needing no teacher logits, and stabilize it with \texttt{asinh} compression and batch-level RMS normalization.
    \item \textbf{Performance and generalization.} \method reaches state-of-the-art results on all three games in-domain and at unseen difficulties, improves sample efficiency, and transfers zero-shot to ALFWorld and WebShop.
    \item \textbf{Practicality.} An exact solver adds negligible overhead, and a learned value network provides comparable performance when no exact solver exists.
\end{itemize}

\section{Method}
\label{sec:method}

Our method lets a game-specific solver act as a turn-level teacher that scores each action the LLM takes, refining the sparse terminal reward into a dense per-step process signal. \autoref{sec:preliminaries} formalizes games as reinforcement learning problems and identifies why outcome-only RLVR is limited. \autoref{sec:solver_advantage} introduces the solver teacher, constructs its per-step score, gives the full pipeline that makes it stable and integrates it into training, and finally explains why this score is equivalent to on-policy distillation (OPD, \autoref{thm:main}).

\subsection{Preliminaries}
\label{sec:preliminaries}

\paragraph{Games as multi-turn MDPs.}
We formulate each game as a finite-horizon Markov decision process (MDP) $\mathcal{M}=(\mathcal{S},\mathcal{A},P,r,H)$ with horizon $H$. At turn $t$, the LLM policy $\pi_\theta$ observes a textual rendering of the state $s_t\in\mathcal{S}$, samples an action $a_t\sim\pi_\theta(\cdot\mid s_t)$, and the environment transitions via $P(s_{t+1}\mid s_t,a_t)$ until the episode terminates at some step $T\le H$, yielding a trajectory $\tau=(s_0,a_0,\ldots,s_T)$. For Sokoban and Rush Hour, $s_t$ is the complete board; for Minesweeper, $s_t$ is the \emph{information state} (the revealed board plus the constraints it induces on hidden mines), which is a sufficient statistic that keeps the process Markov. The reward is sparse and verifiable:
\begin{equation}
    r_t=0\quad(t<T),\qquad
    R(\tau)=r_T=\mathbf{1}\{s_T\in\mathcal{S}_{\text{solved}}\}.
    \label{eq:sparse_reward}
\end{equation}
This $0/1$ terminal reward is the true optimization objective throughout the paper; every solver signal introduced later is auxiliary supervision and does not change what counts as winning.

\paragraph{Outcome-supervised RLVR and its bottleneck.}
RLVR maximizes the expected return $J(\theta)=\mathbb{E}_{\tau\sim\pi_\theta}[R(\tau)]$ via policy gradients~\citep{PolicyGradient}. To avoid a learned critic, GRPO~\citep{GRPO} samples a group of $G$ trajectories from $\pi_{\theta_{\text{old}}}$ for a prompt $q$ and normalizes their returns into a trajectory-level advantage,
\begin{equation} \small
    \hat{A}^{\text{outcome}}_i
    =
    \frac{R_i-\mu_R}{\sigma_R+\delta},
    \qquad
    \mu_R=\frac{1}{G}\sum_{j=1}^{G}R_j,\quad
    \sigma_R=\sqrt{\frac{1}{G}\sum_{j=1}^{G}(R_j-\mu_R)^2},
    \label{eq:outcome_advantage}
\end{equation}
with a small constant $\delta>0$ for numerical stability. This trajectory-level scalar is assigned to every token of the trajectory and optimized through a clipped surrogate objective, where the importance-sampling ratio $\rho_t$ between $\pi_\theta$ and $\pi_{\theta_{\text{old}}}$ is clipped to $[1{-}\epsilon,1{+}\epsilon]$:
\begin{equation} \small
    \mathcal{J}_{\mathrm{GRPO}}(\theta)
    =
    \mathbb{E}_{q,\,\{o_i\}\sim\pi_{\theta_{\text{old}}}}
    \!\Bigg[\frac{1}{G}\sum_{i=1}^{G}\frac{1}{|o_i|}
    \sum_{t=1}^{|o_i|}
    \Big(\min\!\big(\rho_t\,\hat{A}_{i},\,
    \operatorname{clip}(\rho_t,1{-}\epsilon,1{+}\epsilon)\,\hat{A}_{i}\big)
    -\beta_{\text{kl}}\,D_{\mathrm{KL}}\!\big[\pi_\theta\,\|\,\pi_{\text{ref}}\big]
    \Big)\Bigg].
    \label{eq:grpo_objective}
\end{equation}
The bottleneck is now explicit: because $\hat{A}_{i} = \hat{A}^{\text{outcome}}_i$ is computed solely from the terminal outcome, every turn in a trajectory receives the same credit. This coarse trajectory-level credit is the root of the credit-assignment failure in long-horizon games, motivating the turn-level solver signal introduced next.

\subsection{Solver-Guided Turn-Level Credit}
\label{sec:solver_advantage}

\begin{figure*}[t]
    \centering
    \includegraphics[width=0.94\textwidth]{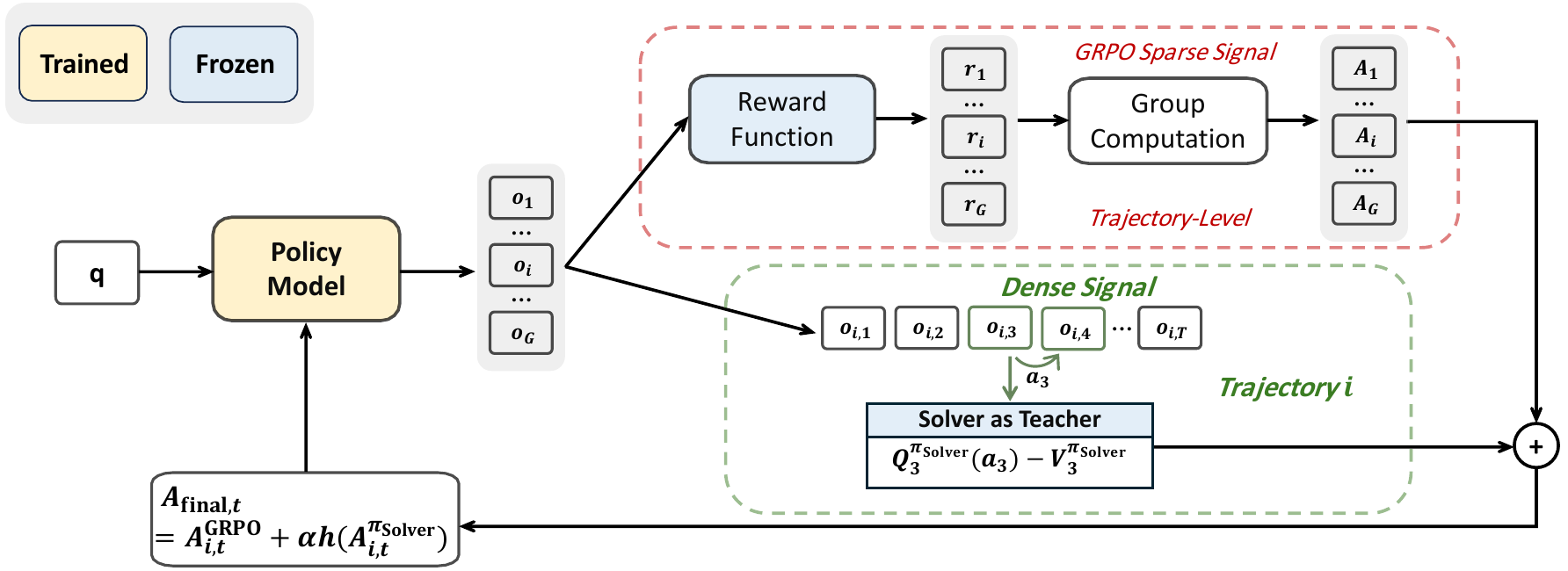}
    \caption{\textbf{Method overview.} We augment GRPO's outcome advantage with a shifted solver advantage derived from turn-level cost-to-go changes.}
    \label{fig:method}
\end{figure*}

\paragraph{A per-step score from the solver.}
To score each action individually, we need a way to measure how good the current state is. The $0/1$ terminal reward cannot do this, as it only takes a value at the very end. A solver can, because it completes a game from any state. This property equips any solver with a general-purpose measure that we call the \emph{cost-to-go} $N(s)$: the minimum amount of work needed to reach a win from state $s$. Its exact unit varies by game, but the meaning of \say{how far from the goal} is universal: for Sokoban and Rush Hour, $N(s)$ is the minimum number of actions to solve the board, and for Minesweeper it is the minimum number of reveals needed to clear all safe cells.

A natural per-step score is $N(s_t)-N(s_{t+1})$, the drop in cost-to-go caused by the move. We ground this score in RL by defining an auxiliary shortest-path objective ($-1$ per action, terminal value $0$) and setting $V^{\pi_{\text{Solver}}}(s)=-N(s)$, which makes states closer to the goal more valuable.
\begin{equation}
\begin{aligned}
    Q^{\pi_{\text{Solver}}}(s_t,a_t)
    &=-1+\mathbb{E}_{s_{t+1}}[V^{\pi_{\text{Solver}}}(s_{t+1})], \,\, \text{and} \, V^{\pi_{\text{Solver}}}(s_t)=-N(s_t) \\
    A^{\pi_{\text{Solver}}}(s_t,a_t)
    &=-1+N(s_t)-\mathbb{E}_{s_{t+1}}[N(s_{t+1})].
\end{aligned}
    \label{eq:solver_advantage}
\end{equation}
The advantage $A=Q-V$ measures how much better a specific action is compared to the expected outcome under the solver's policy. The $\mathbb{E}_{s_{t+1}}$ is over the environment transition; for the deterministic games studied here it reduces to the single resulting state, and we keep the expectation for generality.

\paragraph{A shift so that progress maps to positive credit.}
Under the solver's optimal policy, the expected next-state cost $\mathbb{E}_{s_{t+1}}[N(s_{t+1})]$ equals $N(s_t)-1$, because the solver always reduces the remaining steps by exactly one. Substituting into \autoref{eq:solver_advantage} gives $A^{\pi_{\text{Solver}}}=-1+(N(s_t)-(N(s_t)-1))=0$: an optimal action has zero advantage, and any suboptimal action that fails to reduce the cost-to-go by a full step scores negative. This makes the raw advantage non-positive, which is inconvenient as a signal. Shifting by $+1$ gives the \emph{shifted solver advantage},
\begin{equation}
    \widetilde{A}^{\pi_{\text{Solver}}}(s_t,a_t)
    =A^{\pi_{\text{Solver}}}(s_t,a_t)+1
    =N(s_t)-\mathbb{E}_{s_{t+1}}[N(s_{t+1})],
    \label{eq:shifted_solver_advantage}
\end{equation}
which equals the drop in cost-to-go caused by the move. Its meaning is direct: an optimal move that advances one step receives $+1$, a move with no progress receives $0$, and a harmful move receives negative credit. The one special case is a transition into an unsolvable dead state, where $N(s_{t+1})=\infty$; we cap such a transition at the finite penalty $-N(s_t)$, counting it as losing the $N(s_t)$ steps that would otherwise have won the game.

\paragraph{Shaping the signal for training.}
The shifted advantage is clean in theory but problematic if used directly. First, it is heavy-tailed: most moves change the cost-to-go by only $0$ or $\pm 1$, but the dead-state penalty $-N(s_t)$ can be large, and a few such extremes would dominate the gradient. Second, its scale varies across games, as a hard board has a cost-to-go of dozens while an easy one has only a few. We address both with two lightweight transformations. First, an \texttt{asinh} compression,
\begin{equation}
    g(x)=\operatorname{asinh}(x)=\ln\!\big(x+\sqrt{x^2+1}\,\big),
    \label{eq:asinh}
\end{equation}
which is near-linear for small $x$, so common small-progress values are preserved and remain distinguishable, and grows only logarithmically for large $|x|$, so rare dead-state penalties are compressed. We use \texttt{asinh} rather than a plain logarithm because it is defined and odd-symmetric across negative, zero, and positive values, fully preserving the sign that encodes beneficial, neutral, and harmful moves. Second, batch-level RMS normalization rescales the signal by its root-mean-square magnitude over all turns in the batch,
\begin{equation}
    h(x)=\frac{g(x)}{\mathrm{RMS}_{\mathcal{B}}(g)+\epsilon},
    \qquad
    \mathrm{RMS}_{\mathcal{B}}(g)=\sqrt{\tfrac{1}{|\mathcal{B}|}\textstyle\sum_{(i,t)\in\mathcal{B}} g\!\left(\widetilde{A}^{\pi_{\text{Solver}}}(s_{i,t},a_{i,t})\right)^2},
    \label{eq:rms_norm}
\end{equation}
where $\mathcal{B}$ contains all turns in the batch, placing the signal on a consistent scale across games. We divide by the RMS \emph{without} subtracting the mean, which is deliberate: the value $0$ should keep meaning \say{no progress} so that positive stays beneficial and negative stays harmful; subtracting the mean would shift this zero point and destroy the sign information on which the signal relies.

\paragraph{Combining with GRPO.}
We combine this per-step process signal with the terminal reward, so as to keep the true objective of winning while adding fine-grained per-step credit (\autoref{fig:method}). For trajectory $i$ at turn $t$, we write $\widetilde{A}^{\pi_{\text{Solver}}}_{i,t}\triangleq\widetilde{A}^{\pi_{\text{Solver}}}(s_{i,t},a_{i,t})$ for the shifted solver advantage evaluated at that trajectory's state-action pair. We add the shaped solver signal to GRPO's outcome advantage and broadcast the result to every token of the turn,
\begin{equation}
    \hat{A}_{i,t}
    =
    \hat{A}^{\text{outcome}}_i + \alpha\, h\!\left(\widetilde{A}^{\pi_{\text{Solver}}}_{i,t}\right),
    \label{eq:final_advantage}
\end{equation}
with a single coefficient $\alpha$ controlling the strength of solver guidance. Three granularities meet here: $\hat{A}^{\text{outcome}}_i$ is trajectory-level and identical across all turns of a trajectory, anchoring the global win-or-loss credit; $h(\widetilde{A}^{\pi_{\text{Solver}}}_{i,t})$ is turn-level and varies step by step, refining credit within the trajectory; and their sum is broadcast to the token level, since GRPO optimizes over tokens. The combined advantage $\hat{A}_{i,t}$ replaces $\hat{A}^{\text{outcome}}_i$ in the clipped GRPO surrogate of \autoref{eq:grpo_objective}, so the terminal reward still anchors trajectory-level credit while the solver signal refines it turn by turn.

\paragraph{Equivalence to logit-free OPD.}
Having presented the complete pipeline, we now show that the solver advantage is not merely a hand-designed signal but is mathematically equivalent to distilling from the solver. The key observation is that a sufficiently strong solver can be viewed as a \emph{soft-optimal} teacher policy whose action probabilities grow exponentially with the action value: $\pi_{\text{Solver}}(a\mid s)\propto \exp\!\big(Q^{\pi_{\text{Solver}}}(s,a)/\tau\big)$ for a temperature $\tau>0$~\citep{MaxEntIRL,SAC}. Taking $\log$ immediately gives $A^{\pi_{\text{Solver}}}(s,a)=\tau\,\log\pi_{\text{Solver}}(a\mid s)$: the solver advantage is exactly the teacher's log-preference for the action. We then obtain the following result (proved in \appref{app:kl_proof}):

\begin{tcolorbox}[
    colback=gray!3,
    colframe=black,
    boxrule=0.5pt,
    arc=2pt,
    left=6pt,
    right=6pt,
    top=4pt,
    bottom=4pt,
    breakable
]
\begin{theorem}[Implicit Objective of CAST]
\label{thm:main}
Assume the solver is soft-optimal with temperature $\tau>0$, the solver advantages satisfy $|A^{\pi_{\text{Solver}}}|\lesssim 1$, and GRPO provides an unbiased task-return gradient under a frozen-visitation surrogate. Then the policy-gradient update of the training rule~\autoref{eq:final_advantage} equals the gradient of
\begin{equation}
    \mathcal{J}(\theta)
    =\underbrace{\mathbb{E}_{s_0\sim\mu}\!\big[V^{\pi_\theta}_{\text{task}}(s_0)\big]}_{\text{task return}}
    -\;\beta\,\underbrace{\mathbb{E}_{s\sim d^{\pi_\theta}}\!\Big[\mathrm{H}\!\big(\pi_\theta(\cdot|s),\,\pi_{\text{Solver}}(\cdot|s)\big)\Big]}_{\text{cross-entropy distillation}},
    \,\,\, \beta=\frac{\alpha\,\tau}{\sqrt{2}\,(\mathrm{RMS}_{\mathcal{B}}(g)+\epsilon)}\,,
    \label{eq:implicit_obj}
\end{equation}
where $\mathrm{H}(\pi_\theta,\pi_{\text{Solver}})=-\mathbb{E}_{a\sim\pi_\theta}[\log\pi_{\text{Solver}}(a\mid s)]$ is the cross-entropy from $\pi_\theta$ to $\pi_{\text{Solver}}$, and $d^{\pi_\theta}(s)$ is the expected number of times state $s$ is visited during an episode.
\end{theorem}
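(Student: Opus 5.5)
The plan is to split the policy-gradient update induced by \autoref{eq:final_advantage} into its two additive pieces and identify each with the gradient of one term of \autoref{eq:implicit_obj}. Because the combined advantage $\hat{A}_{i,t}=\hat{A}^{\text{outcome}}_i+\alpha h(\widetilde{A}^{\pi_{\text{Solver}}}_{i,t})$ enters the surrogate linearly (we work at the unclipped point $\rho_t=1$, where the clipped surrogate has the same gradient as the plain one), the expected update is
\[
\mathbb{E}\Big[\textstyle\sum_t \hat{A}^{\text{outcome}}\nabla_\theta\log\pi_\theta(a_t|s_t)\Big]
+\alpha\,\mathbb{E}\Big[\textstyle\sum_t h(\widetilde{A}^{\pi_{\text{Solver}}}_t)\nabla_\theta\log\pi_\theta(a_t|s_t)\Big].
\]
The first term is handled directly by the assumption that GRPO gives an unbiased task-return gradient under the frozen-visitation surrogate. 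It therefore equals $\nabla_\theta\mathbb{E}_{s_0\sim\mu}[V^{\pi_\theta}_{\text{task}}(s_0)]$, and no further work is needed there.

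For the second term, rewrite the sum over turns as an expectation over the unnormalized visitation measure $d^{\pi_\theta}$. This gives
\[
\alpha\,\mathbb{E}_{s\sim d^{\pi_\theta}}\mathbb{E}_{a\sim\pi_\theta}\big[h(\widetilde{A}(s,a))\nabla_\theta\log\pi_\theta(a|s)\big].
\]
Under the frozen-visitation convention, $d^{\pi_\theta}$ is held fixed when differentiating. The score-function identity then shows this equals $\nabla_\theta\,\alpha\,\mathbb{E}_{s\sim d}\mathbb{E}_{a\sim\pi_\theta}[h(\widetilde{A}(s,a))]$, since $h(\widetilde{A})$ does not depend on $\theta$. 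The RMS denominator is treated as a batch constant, so it also passes through the gradient.

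The remaining step, which I expect to be the main obstacle, is linearizing $h$ so the solver term becomes $\log\pi_{\text{Solver}}$. By soft-optimality, $A^{\pi_{\text{Solver}}}=\tau\log\pi_{\text{Solver}}$, so $\widetilde{A}=1+\tau\log\pi_{\text{Solver}}(a|s)$. The constant $1$ contributes nothing, because $\mathbb{E}_{a\sim\pi_\theta}[c\,\nabla\log\pi_\theta]=0$ for any $c$ that does not depend on $a$. The difficulty is that $g=\operatorname{asinh}$ is applied to $\widetilde{A}$, not to $A$.

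I would expand $\operatorname{asinh}$ around the shifted point $x_0=1$, using the assumption $|A|\lesssim 1$ to justify a first-order approximation:
\[
\operatorname{asinh}(1+A)\approx\operatorname{asinh}(1)+A\,\frac{1}{\sqrt{1+1^2}}=\operatorname{asinh}(1)+\frac{A}{\sqrt{2}}.
\]
The derivative value $1/\sqrt{2}$ is exactly where the $\sqrt2$ in $\beta$ comes from, which supports this choice of expansion point. The constant $\operatorname{asinh}(1)$ again drops out under the baseline argument. The solver term therefore becomes
\[
\frac{\alpha\tau}{\sqrt2(\mathrm{RMS}_{\mathcal{B}}(g)+\epsilon)}\,\nabla_\theta\,\mathbb{E}_{s\sim d}\mathbb{E}_{a\sim\pi_\theta}[\log\pi_{\text{Solver}}(a|s)]=-\beta\,\nabla_\theta\,\mathbb{E}_{s\sim d}\big[\mathrm{H}(\pi_\theta,\pi_{\text{Solver}})\big].
\]
Adding this to the task-return gradient yields \autoref{eq:implicit_obj}. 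I would state plainly that the equality is exact only up to the second-order remainder $O(A^2)$ of the $\operatorname{asinh}$ expansion. This remainder is what the ``$\lesssim$'' assumption is meant to control, and it vanishes when solver advantages are small.
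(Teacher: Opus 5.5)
Your proposal is correct and follows the paper's proof essentially step for step. The steps are the same: the additive split of the update induced by \autoref{eq:final_advantage}; the GRPO-consistency assumption for the task-return term; a first-order Taylor expansion of $\operatorname{asinh}$ around the optimal-move point $\widetilde{A}=1$ (which gives the $1/\sqrt{2}$ factor), combined with the soft-optimality identity $A^{\pi_{\text{Solver}}}=\tau\log\pi_{\text{Solver}}$; and the frozen-visitation surrogate, which identifies the shaping term with $-\beta\,\nabla_\theta\mathbb{E}_{s\sim d^{\pi_\theta}}[\mathrm{H}(\pi_\theta,\pi_{\text{Solver}})]$. The differences are only cosmetic: you apply the score-function identity before linearizing rather than after, and you state explicitly the unclipped point $\rho_t=1$ and the $O(A^2)$ remainder, which the paper records in its supporting lemma.
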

\end{tcolorbox}

\noindent The theorem reveals that our method implicitly maximizes the task return while minimizing the cross-entropy from $\pi_\theta$ to $\pi_{\text{Solver}}$, which is the objective of OPD~\citep{OPD}. We have:

\emph{Logit-free OPD.}
Traditional distillation requires the teacher's full output distribution. Here, the identity $A^{\pi_{\text{Solver}}}=\tau\log\pi_{\text{Solver}}$ means a single scalar per action already encodes the teacher's log-preference, that is, no logits needed.

\emph{Why the student can surpass the solver.}
Isolating the KL component of~\autoref{eq:implicit_obj} and solving the per-state optimization yields a closed-form optimal policy $\pi^*(a|s)\propto\pi_{\text{Solver}}(a|s)\,\exp\!\big(A^{\pi_\theta}_{\text{task}}(s,a)/\beta\big)$: the solver distribution serves as a prior, and the task advantage tilts it exponentially. At $\beta\to\infty$ (pure distillation), $\pi^*\to\pi_{\text{Solver}}$; at finite $\beta$, the tilt lets $\pi^*$ deviate wherever the task reward warrants it, so the student can \emph{surpass} the teacher.
\section{Experiments}
\label{sec:experiments}

Our experiments address four research questions below:

\textbf{RQ1:} Does the solver-derived turn-level signal outperform outcome-only and process-level RL baselines in-domain on the training games (\autoref{sec:exp_main})?

\textbf{RQ2:} Do the trained agents generalize to unseen difficulty levels within the training games and to held-out domains (\autoref{sec:exp_main}, \autoref{sec:exp_ood})?

\textbf{RQ3:} Does each design component contribute to the final performance, namely the solver-advantage weight, the \texttt{asinh} transformation, and batch-level RMS normalization (\autoref{sec:exp_ablation})?

\textbf{RQ4:} Is solver guidance practical, i.e., does it add little training overhead and remain effective when the solver is a learned value network (\autoref{sec:exp_analysis})?

\subsection{Experimental Setup}

\paragraph{Datasets and Benchmarks}

We train and evaluate on three classic games, \textbf{Sokoban}, \textbf{Minesweeper}, and \textbf{Rush Hour}, whose clear rules and efficient solvers provide verifiable rewards and process-level supervision. Together, they span complementary reasoning challenges, from long-horizon planning and partial-observation inference to constrained combinatorial search. For each game, we procedurally generate instances across difficulty levels defined by game-specific structural parameters and calibrated with solver-derived solution complexity. We canonicalize board configurations to remove duplicates, prevent overlap between training and test instances, and verify solvability with the corresponding solver. Evaluation covers two within-game settings: \emph{In-Domain} (ID), covering difficulty levels seen during training, and \emph{Unseen-Difficulty}, covering harder levels held out from training. We evaluate each setting on 200 held-out instances per game. Throughout, \emph{unseen difficulty} denotes this within-game shift, whereas \emph{OOD transfer} denotes zero-shot evaluation on a held-out task domain. Full generation and difficulty-control details are given in~\appref{app:dataset_details}.

To probe OOD transfer beyond the training games, we further evaluate zero-shot on two held-out agentic domains without any further fine-tuning: the embodied benchmark ALFWorld~\citep{Alfworld} and the web benchmark WebShop~\citep{Webshop}; benchmark sources and evaluation protocols are detailed in~\appref{app:benchmark_details}.

\paragraph{Baselines and Models}

We use \textit{Qwen3-4B-Instruct-2507}~\citep{Qwen3} as the base policy for all trained methods. As training-free references, we evaluate the frozen base policy with a ReAct-style prompt~\citep{ReAct}, together with a suite of strong closed-source models under the same prompt, namely \textit{Gemini-2.5-Flash}~\citep{Gemini2.5}, \textit{Gemini-2.5-Pro}~\citep{Gemini2.5}, \textit{Claude-Sonnet-4.5}~\citep{ClaudeSonnet45}, \textit{Claude-Opus-4.5}~\citep{ClaudeOpus45}, \textit{Claude-Sonnet-4.6}~\citep{ClaudeSonnet46}, and \textit{Claude-Opus-4.6}~\citep{ClaudeOpus46}. For trained baselines, we compare against outcome-only RLVR methods that fine-tune the same base policy with the same sparse terminal reward: GRPO~\citep{GRPO}, GSPO~\citep{GSPO}, and DAPO~\citep{DAPO}. This keeps the terminal reward fixed and focuses the comparison on the added solver-derived turn-level signal. We also include GiGPO~\citep{GiGPO}, a process-level baseline, to examine whether solver-guided supervision provides gains beyond anchor-state process shaping.

\paragraph{Implementation Details}

The agent plays each game through a multi-turn ReAct-style loop, alternating between reasoning over a textual board rendering and emitting one action per turn, until the episode ends on a solve, an illegal move, or an exhausted turn budget (\appref{app:agent_impl}). Sokoban, Minesweeper, and Rush Hour share this interface but differ in their state encodings and step dynamics, which we detail in \appref{app:env_impl}. For each game we build a dedicated solver that returns, for any reachable state, the optimal action value $Q^{\pi_{\text{Solver}}}$ and state value $V^{\pi_{\text{Solver}}}$: weighted A$^{\star}$ search for Sokoban, constraint-satisfaction solving with exact combinatorial mine-probability inference for Minesweeper, and a precomputed full-state distance table via multi-source reverse BFS for Rush Hour (\appref{app:solver_impl}). We query these solvers on the states visited during rollouts to form the solver-advantage signal.

\method is implemented on top of the DAPO backbone, from which we retain the token-mean loss and clip-higher objective, augmenting its outcome objective with the solver-advantage process signal described in \autoref{sec:method}. Following common practice, all environments use a 0/1 sparse reward. We train with a batch size of 16, 8 rollouts per prompt, and a maximum response length of 16384 tokens; the solver-advantage weight is set to $\alpha{=}0.1$ by default, and the remaining training settings are listed in \appref{app:hyperparams}. For evaluation, we score every game by success rate. To reduce sampling variance, for each instance we draw four independent rollouts and report the average success rate, denoted Avg@4. Unless otherwise noted, all tables report the mean over \textbf{3 independent runs}, taking each run's final checkpoint at step 200 (Sokoban), 400 (Minesweeper), and 300 (Rush Hour) for both within-game and OOD-transfer evaluation; closed-source references use a single run under the same Avg@4 protocol.

\subsection{Main Results on the Training Games}
\label{sec:exp_main}
\definecolor{ourshl}{RGB}{222,235,247}

\begin{table*}[t]
    \centering
    \small
    \setlength{\tabcolsep}{5.2pt}
    \renewcommand{\arraystretch}{1.1}
    \caption{\textbf{Main results on the training games.} Avg@4 success rate (\%) on in-domain (ID) and unseen-difficulty (Unseen) levels; \textbf{Average} aggregates the three games. Within the Qwen3-4B-Instruct-2507 block, bold and underlined values mark the best and second-best results, respectively.}
    \label{tab:main_results}
    \begin{tabular}{ll cc cc cc cc}
        \toprule
        \multirow{2}{*}{\textbf{Type}} & \multirow{2}{*}{\textbf{Approach}}
            & \multicolumn{2}{c}{\textbf{Sokoban}}
            & \multicolumn{2}{c}{\textbf{Minesweeper}}
            & \multicolumn{2}{c}{\textbf{Rush Hour}}
            & \multicolumn{2}{c}{\textbf{Average}} \\
        \cmidrule(lr){3-4} \cmidrule(lr){5-6} \cmidrule(lr){7-8} \cmidrule(lr){9-10}
            & & \textbf{ID} & \textbf{Unseen} & \textbf{ID} & \textbf{Unseen} & \textbf{ID} & \textbf{Unseen} & \textbf{ID} & \textbf{Unseen} \\
        \midrule
        \multicolumn{10}{l}{\textit{Closed-Source Models (ReAct; G = Gemini)}} \\
        Prompting & G-2.5-Flash & 64.0 & 33.3 & 38.1 & 23.0 & 73.9 & 63.6 & 58.7 & 40.0 \\
        Prompting & G-2.5-Pro & 77.5 & 45.6 & 50.8 & 18.8 & 92.0 & 88.9 & 73.4 & 51.1 \\
        Prompting & Sonnet-4.5 & 60.0 & 29.3 & 34.4 & 22.8 & 56.9 & 42.3 & 50.4 & 31.5 \\
        Prompting & Opus-4.5 & 71.9 & 37.3 & 47.3 & 29.4 & 70.4 & 59.1 & 63.2 & 41.9 \\
        Prompting & Sonnet-4.6 & 79.5 & 47.1 & 57.6 & 42.5 & 88.1 & 81.4 & 75.1 & 57.0 \\
        Prompting & Opus-4.6 & 82.5 & 51.5 & 64.6 & 55.3 & 92.6 & 85.3 & 79.9 & 64.0 \\
        \midrule
        \multicolumn{10}{l}{\textit{Base Model: Qwen3-4B-Instruct-2507}} \\
        Prompting   & ReAct & 44.3 & 16.8 & 4.5 & 0.4 & 1.1 & 0.5 & 16.6 & 5.9 \\
        RL Training & GRPO & 71.3 & 30.5 & 9.7 & 0.5 & \underline{53.8} & 26.7 & 44.9 & 19.2 \\
        RL Training & GSPO & \underline{72.6} & 28.3 & 28.2 & 5.0 & 24.8 & 11.5 & 41.9 & 14.9 \\
        RL Training & DAPO & 71.9 & \underline{31.3} & \underline{29.8} & \underline{7.3} & 32.5 & 17.4 & 44.7 & 18.7 \\
        RL Training & GiGPO & 70.6 & 29.6 & 13.7 & 1.4 & 51.9 & \underline{31.3} & \underline{45.4} & \underline{20.8} \\
        \rowcolor{ourshl}
        RL Training & CAST (Ours) & \textbf{77.0} & \textbf{34.8} & \textbf{44.7} & \textbf{11.0} & \textbf{64.7} & \textbf{39.5} & \textbf{62.1} & \textbf{28.4} \\
        \bottomrule
    \end{tabular}
\end{table*}

\paragraph{Comparison with training-free baselines.}
Training substantially improves the \textit{Qwen3-4B-Instruct-2507} base policy. With a ReAct framework, the frozen base reaches game-averaged Avg@4 scores of 16.6 on ID and 5.9 on Unseen-Difficulty; after training, our 4B agent reaches 62.1 and 28.4. On ID, it also exceeds the ReAct scores of two closed-source models, Gemini-2.5-Flash (58.7) and Claude-Sonnet-4.5 (50.4), showing the practical benefit of solver-guided RL beyond prompting alone for this base policy.

\paragraph{Controlled comparison against trained baselines.}
Trained from the same base with the same final reward, \method attains the best Avg@4 in every game under both evaluation settings. The cleanest comparison is against DAPO, our backbone with the solver-advantage signal removed: adding this signal lifts the ID average from 44.7 to 62.1 and the Unseen-Difficulty average from 18.7 to 28.4, isolating its contribution. The lead also holds over the outcome-only baseline GRPO (44.9 ID, 19.2 Unseen) and the process-level baseline GiGPO (45.4, 20.8), and spans all three games rather than coming from a single environment. Minesweeper shows the largest ID gain of $+14.9$ over the best trained baseline, while its Unseen-Difficulty score of 11.0 leaves room for improvement on harder instances.

\paragraph{Training dynamics.}

\autoref{fig:main_curves} shows that solver guidance improves both training efficiency and validation performance. \method attains the highest validation Avg@4 on all three games and reaches DAPO's peak after only 120, 200, and 140 training steps on Sokoban, Minesweeper, and Rush Hour, respectively, compared with 200, 400, and 240 steps for DAPO, corresponding to a $1.7$--$2.0\times$ speedup. \method leads on training reward across all three games, clearly on the harder Minesweeper and Rush Hour and edging ahead only late on the easier Sokoban.

\begin{figure*}[t]
    \centering
    \includegraphics[width=\textwidth]{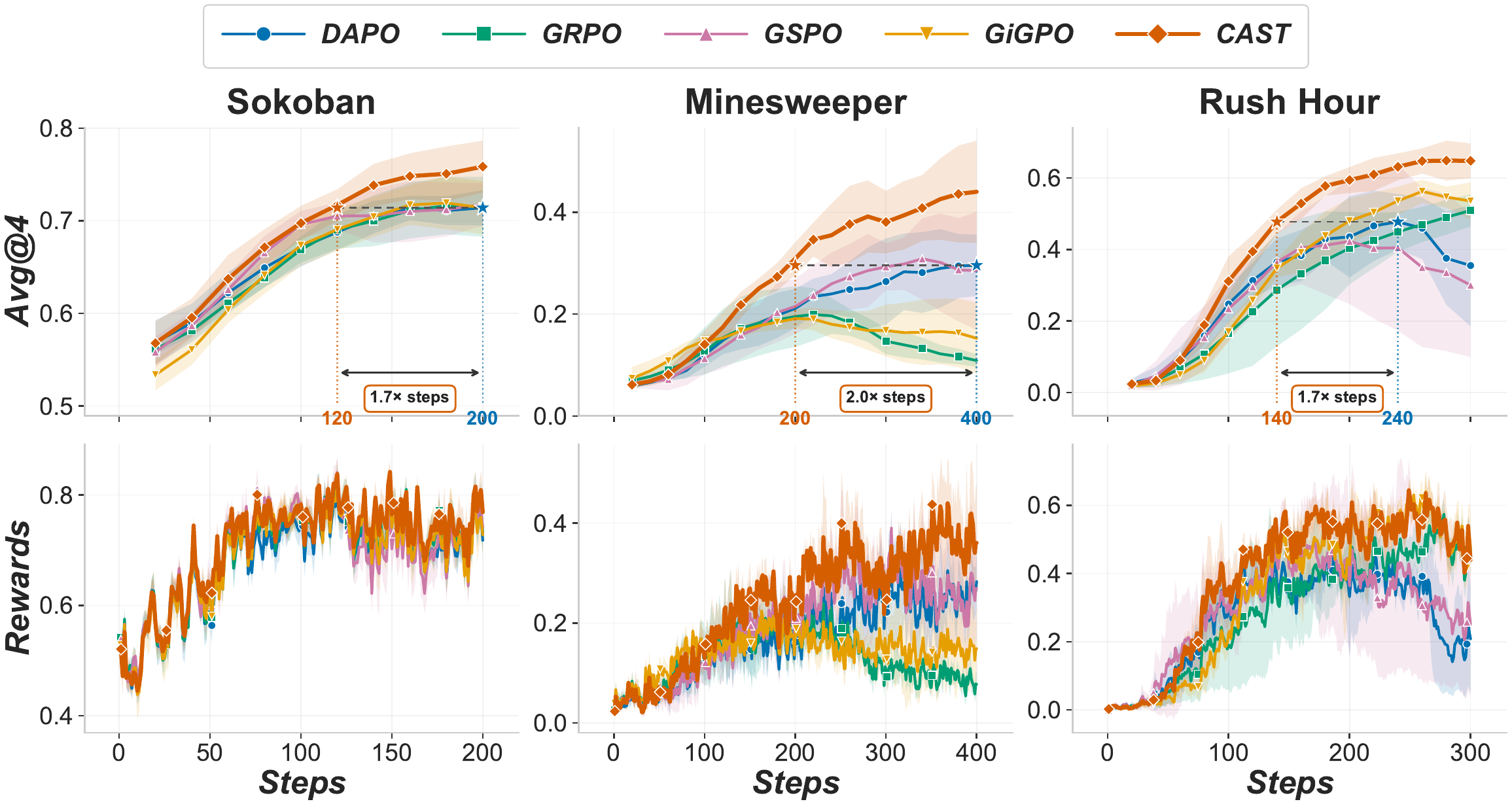}
    \caption{\textbf{Training dynamics.} Horizontal dashed lines mark DAPO's peak validation Avg@4; vertical dotted lines mark when \method and DAPO first reach it, in orange and blue respectively. Curves are EMA-smoothed at $0.6$, with bands showing the standard deviation over three runs.}
    \label{fig:main_curves}
\end{figure*}

\subsection{Zero-Shot OOD Transfer}
\label{sec:exp_ood}
\definecolor{ourshl}{RGB}{222,235,247}

\begin{table*}[t]
    \centering
    \footnotesize
    \setlength{\tabcolsep}{4pt}
    \renewcommand{\arraystretch}{1.1}
    \caption{\textbf{Zero-shot OOD transfer.} Avg@4 success rate (\%) on ALFWorld and WebShop; \textbf{Average} aggregates source-game agents, and \textbf{Overall} aggregates both domains. Dashes denote entries not applicable to ReAct; bold and underlined values mark the best and second-best results, respectively.}
    \label{tab:ood_domain}
    \begin{tabular}{ll cccc cccc c}
        \toprule
        \multirow{2}{*}{\textbf{Type}} & \multirow{2}{*}{\textbf{Method}}
            & \multicolumn{4}{c}{\textbf{ALFWorld}}
            & \multicolumn{4}{c}{\textbf{WebShop}}
            & \multirow{2}{*}{\textbf{Overall}} \\
        \cmidrule(lr){3-6} \cmidrule(lr){7-10}
            & & \textit{Sok.} & \textit{Mine.} & \textit{Rush} & \textbf{Average}
            & \textit{Sok.} & \textit{Mine.} & \textit{Rush} & \textbf{Average} & \\
        \midrule
        \multicolumn{11}{l}{\textit{Base: Qwen3-4B-Instruct-2507}} \\
        Prompting   & ReAct & --- & --- & --- & 30.2 & --- & --- & --- & \underline{18.8} & 24.5 \\
        RL Training & GRPO & 31.5 & 22.3 & 27.9 & 27.2 & \underline{20.4} & 14.0 & 18.0 & 17.5 & 22.4 \\
        RL Training & GSPO & 34.2 & 22.0 & \underline{40.0} & \underline{32.1} & 18.0 & 13.1 & \textbf{20.9} & 17.3 & \underline{24.7} \\
        RL Training & DAPO & \underline{37.3} & \textbf{35.2} & 18.6 & 30.4 & 17.5 & 18.8 & 13.6 & 16.6 & 23.5 \\
        RL Training & GiGPO & 35.6 & 24.9 & 32.1 & 30.9 & 17.2 & \underline{20.2} & 16.4 & 17.9 & 24.4 \\
        \rowcolor{ourshl}
        RL Training & CAST (Ours) & \textbf{38.4} & \underline{33.9} & \textbf{41.4} & \textbf{37.9} & \textbf{23.2} & \textbf{25.1} & \underline{19.7} & \textbf{22.7} & \textbf{30.3} \\
        \bottomrule
    \end{tabular}
\end{table*}

\autoref{tab:ood_domain} shows that the benefits of solver-guided training extend beyond the source games. \method achieves the highest domain average on both ALFWorld (37.9) and WebShop (22.7), exceeding the strongest trained baseline by 5.8 and 4.8 points, respectively. It also outperforms training-free ReAct on both domains and attains the best Overall score of 30.3, 5.6 points above the second-best method. A WebShop case study is provided in \appref{app:webshop_case}.

\subsection{Ablation Studies}
\label{sec:exp_ablation}
We ablate the key design choices of \method on Sokoban.
\begin{enumerate}[noitemsep,topsep=0pt,leftmargin=*]
    \item \textbf{Solver-advantage weight $\alpha$} (\autoref{fig:ablation_curves} left): the weight trades off the process signal against the outcome objective, and our default $\alpha{=}0.1$ is best on both validation Avg@4 and training reward. When $\alpha$ is too small the process signal is diluted and the run behaves like the outcome-only baseline, converging slowly to a lower plateau. When $\alpha$ is too large the signal crowds out the sparse outcome objective: validation Avg@4 climbs quickly at first but peaks early and then declines, and training reward becomes unstable and drops in late training. Only $\alpha{=}0.1$ keeps improving and stays stable throughout.
    \item \textbf{Transformation and normalization} (\autoref{fig:ablation_curves} right): the \texttt{asinh} transformation and batch-level RMS normalization play complementary roles, and removing either lowers the final Avg@4. Dropping \texttt{asinh} hurts most: without it the run is slowest and stays lowest throughout, since unbounded solver advantages let a few large values dominate the update. Dropping RMS normalization instead matches the full method early but plateaus and dips late, with noisier training reward, indicating that per-batch rescaling is what keeps the signal stable as training progresses.
\end{enumerate}

\begin{figure*}[t]
    \centering
    \begin{subfigure}{0.48\textwidth}
        \centering
        \includegraphics[width=\linewidth]{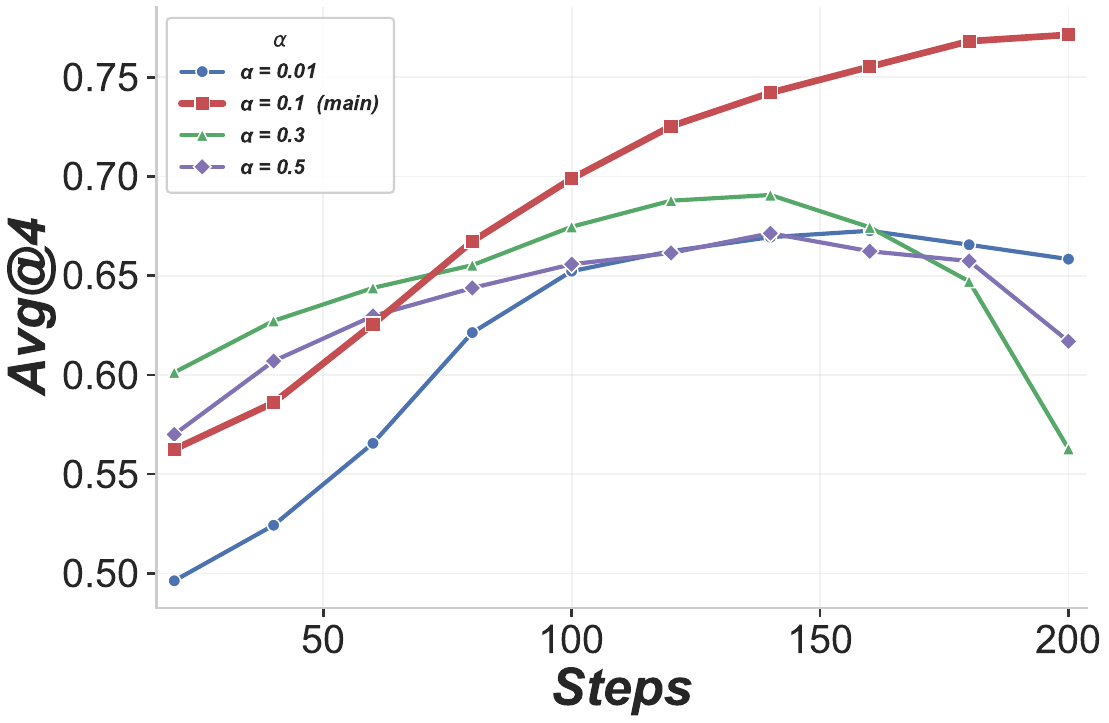}
        \caption{Solver-advantage weight $\alpha$: Val Avg@4}
        \label{fig:ablation_alpha_pass}
    \end{subfigure}
    \hfill
    \begin{subfigure}{0.48\textwidth}
        \centering
        \includegraphics[width=\linewidth]{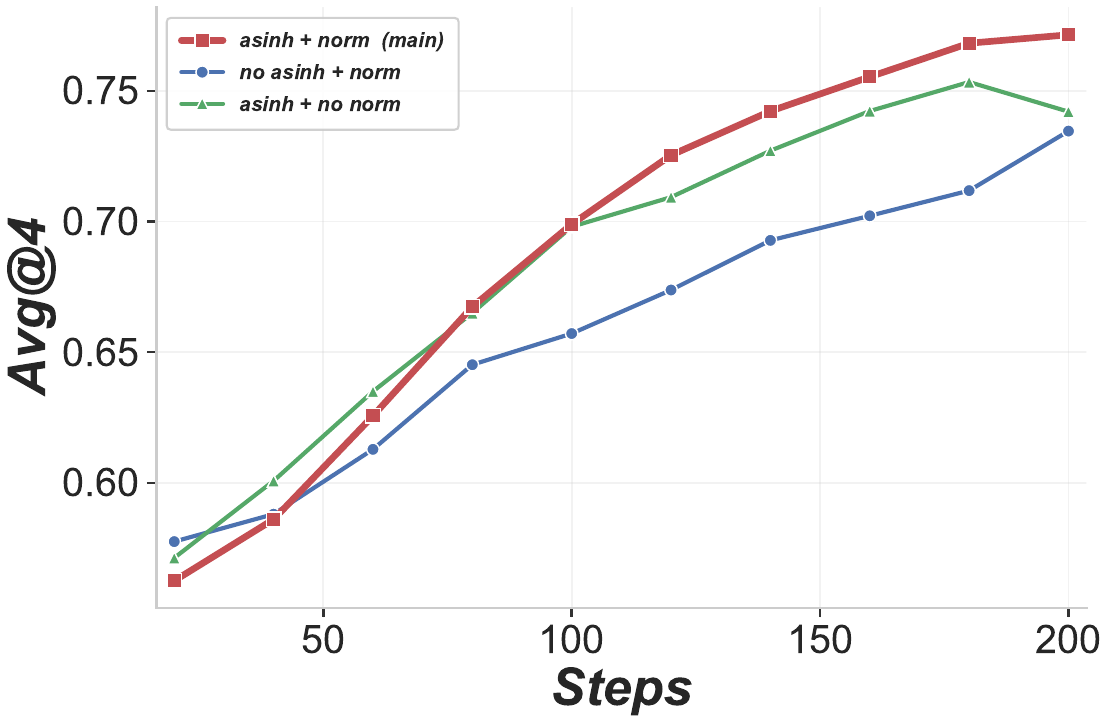}
        \caption{\texttt{asinh} + batch RMS: Val Avg@4}
        \label{fig:ablation_asinh_pass}
    \end{subfigure}

    \vspace{4pt}

    \begin{subfigure}{0.48\textwidth}
        \centering
        \includegraphics[width=\linewidth]{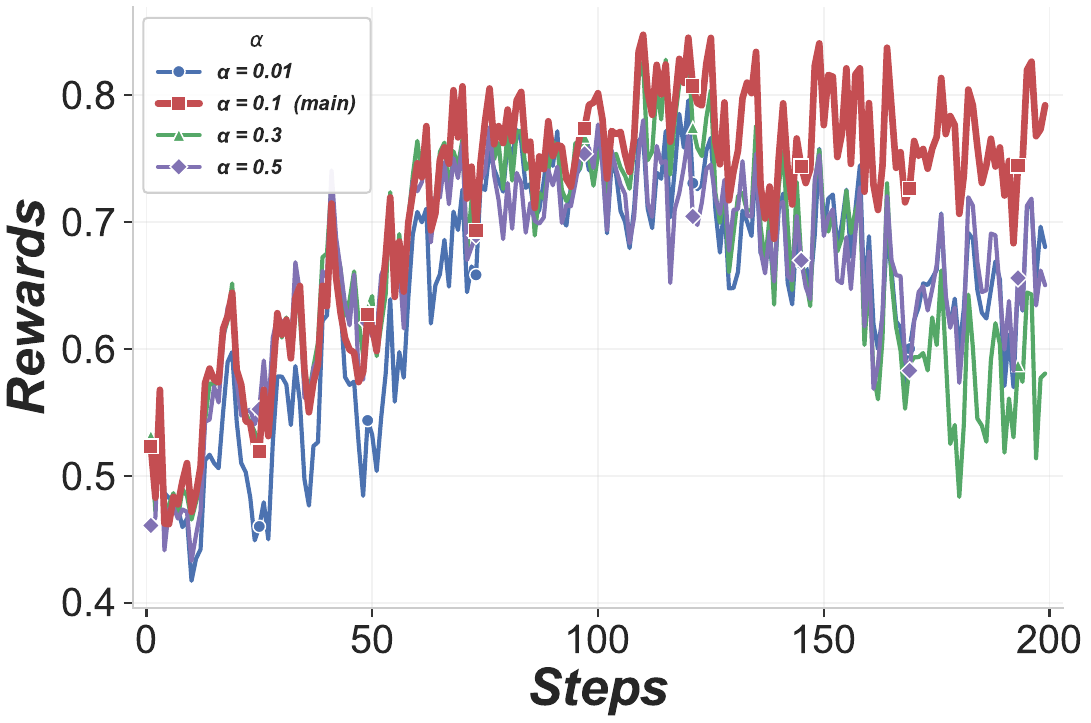}
        \caption{Solver-advantage weight $\alpha$: Train rewards}
        \label{fig:ablation_alpha_reward}
    \end{subfigure}
    \hfill
    \begin{subfigure}{0.48\textwidth}
        \centering
        \includegraphics[width=\linewidth]{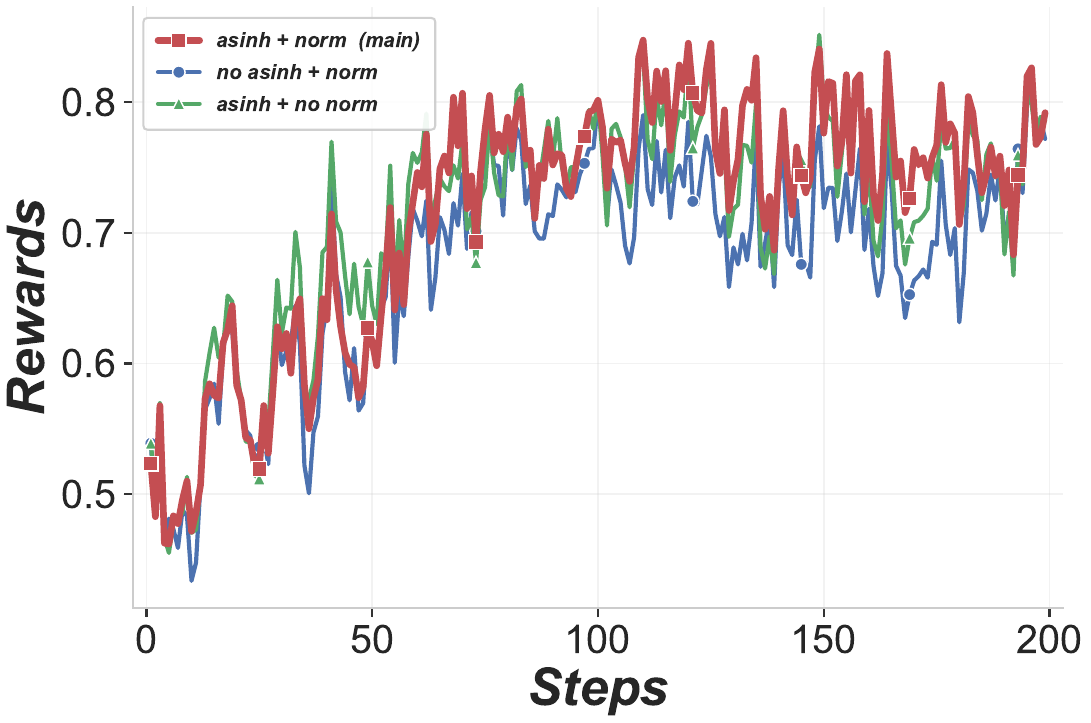}
        \caption{\texttt{asinh} + batch RMS: Train rewards}
        \label{fig:ablation_asinh_reward}
    \end{subfigure}

    \caption{\textbf{Ablation studies on Sokoban.} \textbf{Left column}: sweeping the solver-advantage weight $\alpha$. \textbf{Right column}: removing/replacing the \texttt{asinh} transformation and batch-level RMS normalization. Curves are EMA-smoothed at $0.6$.}
    \label{fig:ablation_curves}
\end{figure*}

\subsection{Analysis}
\label{sec:exp_analysis}
We analyze the practicality of solver guidance from two angles: the training-time overhead it adds, and whether it still helps when the exact solver is replaced by an approximate learned value network.

\paragraph{Solver runtime overhead.}
Forming the solver-advantage signal requires one solver query per environment step to obtain $Q^{\pi_{\text{Solver}}}$ and $V^{\pi_{\text{Solver}}}$ for the current state. \autoref{fig:solver_time} shows that the relative cost of these queries rapidly diminishes at broader levels of the training pipeline. Although solver queries account for $8.4\%$ of an environment step, environment interaction itself occupies only $0.1\%$ of trajectory runtime, with the remaining $99.9\%$ dominated by LLM generation. Solver time therefore accounts for only $0.01\%$ of a trajectory. Since rollout collection occupies $60.7\%$ of a training step, the solver's estimated end-to-end contribution is just $73$ ppm of total training-step wall-clock time, making its overhead negligible relative to generation and policy optimization.

\paragraph{Learned value networks as solvers.}
Solver guidance can also use an approximate learned value function as the process-signal source, which extends it to the traditional RL or deep RL. On Rush Hour, we replace the exact solver with a DQN-based value network~\citep{DDQN} trained without solver distances, then refined by self-distillation and a small geometric prior; we use its state values as the process signal (details in \appref{app:dqn}). As shown in \autoref{fig:analysis_dqn}, the learned-network variant closely tracks the exact-solver version and ends only slightly below it in validation Avg@4. It remains above the trained baselines at the end of training and, unlike DAPO and GSPO, shows no comparable late-stage drop in validation Avg@4. Thus, even with a learned value function, \method retains much of the benefit of exact-solver guidance, showing that solver guidance carries over to standard RL rather than depending on a hand-built solver.

\begin{figure}[t]
    \centering
    \begin{subfigure}{0.49\linewidth}
        \centering
        \includegraphics[width=\linewidth]{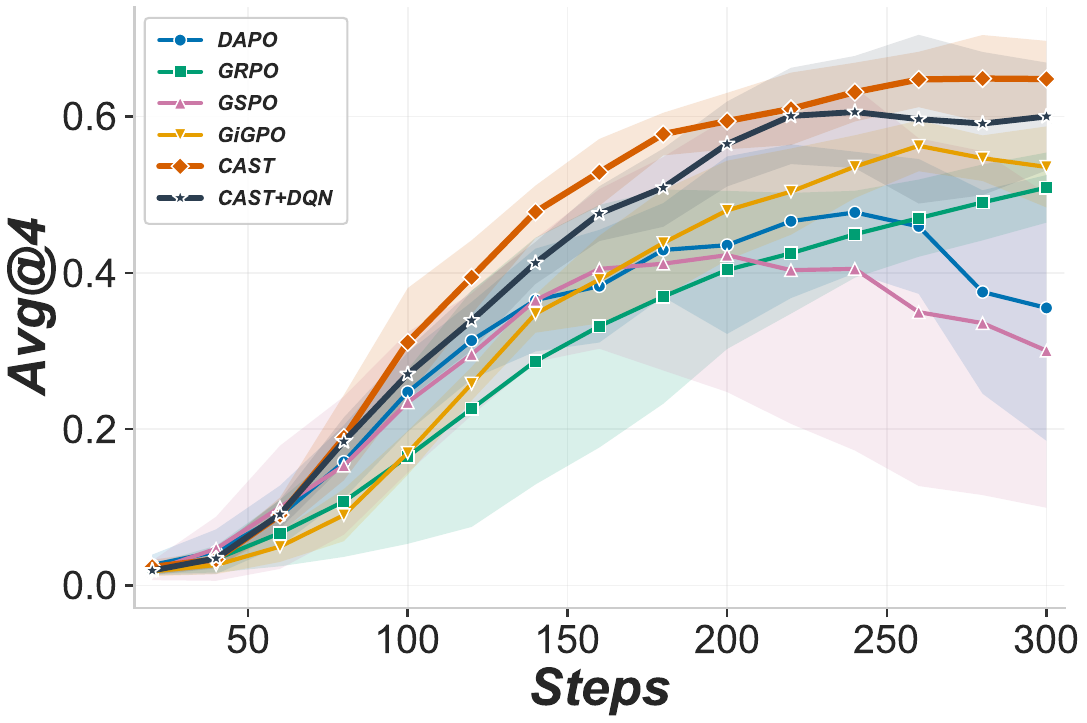}
        \caption{Val Avg@4}
        \label{fig:dqn_pass}
    \end{subfigure}
    \hfill
    \begin{subfigure}{0.49\linewidth}
        \centering
        \includegraphics[width=\linewidth]{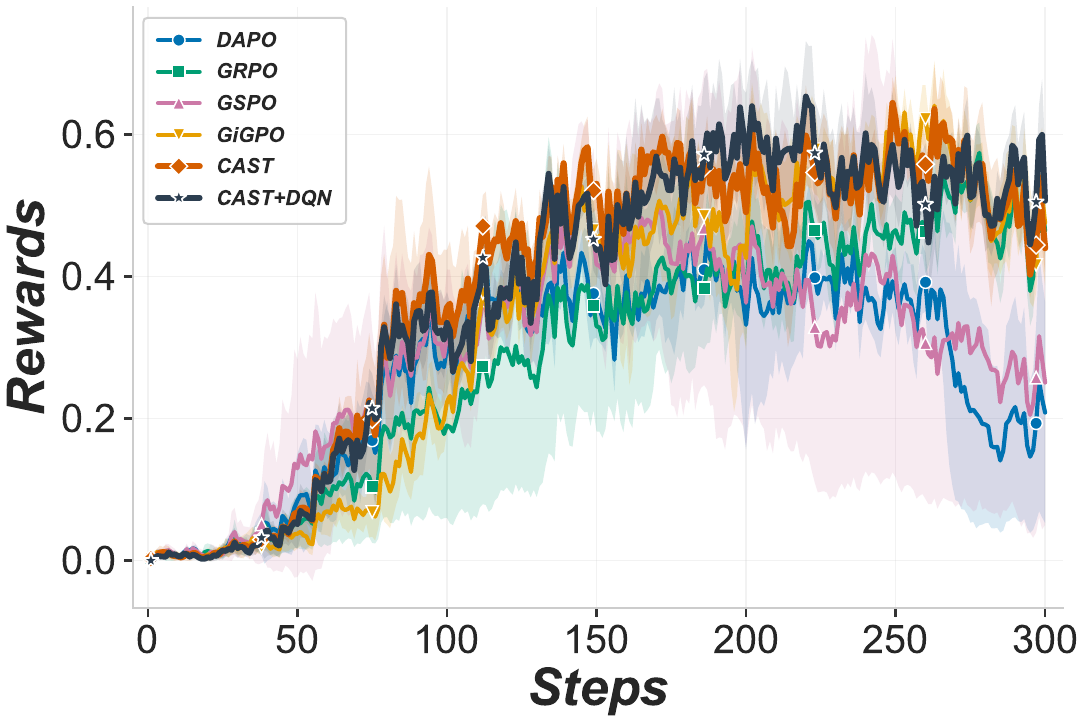}
        \caption{Train rewards}
        \label{fig:dqn_reward}
    \end{subfigure}
    \caption{\textbf{Learned value network as a solver on Rush Hour.} We replace the exact solver with a DQN-based value network trained without solver distances as the process-signal source. Curves are EMA-smoothed at $0.6$; bands show the standard deviation across the 3 runs.}
    \label{fig:analysis_dqn}
\end{figure}

\section{Conclusion}

We address turn-level credit assignment in RLVR for LLM game agents, where sparse terminal rewards provide little direct guidance about which actions lead to success or failure. \method queries a game solver on LLM-visited states and converts per-action changes in the solver's state value into solver advantages that provide fine-grained, turn-level credit; under a soft-optimal solver assumption, this update also admits a logit-free on-policy distillation interpretation without teacher logits. Across Sokoban, Minesweeper, and Rush Hour, \method achieves the best performance among trained methods on every game under both in-domain and unseen-difficulty evaluation, reaches DAPO's peak validation performance in $1.7$--$2.0\times$ fewer training steps, and attains the highest average zero-shot performance on held-out ALFWorld and WebShop. Solver queries add negligible training overhead, while an approximate learned value network retains much of the benefit of exact solver guidance. More broadly, our results suggest that reliable state evaluation offers a practical route from sparse outcomes to fine-grained credit, whether the evaluator is an exact solver or a learned value function.

\clearpage

\bibliography{example_paper}
\bibliographystyle{iclr2026_conference}

\clearpage

\appendix

\section*{Appendix}

\section{Related Work}

\paragraph{LLMs as Game Agents.}
Recent work has moved foundation models from passive text generation toward interactive game play. Prompted and scaffolded agents equip LLMs or VLMs with memory, planning, reflection, and coordination, showing that they can interact with games in open world exploration~\citep{Voyager}, general computer and video game control~\citep{Cradle}, and reflective or multi-agent settings~\citep{Reflexion}, often without updating model weights. In parallel, game environments and benchmarks make such interaction measurable and scalable: open world platforms~\citep{Minedojo} and recent benchmarks for LLMs and VLMs~\citep{BALROG, lmgame-Bench, Textarena, GameWorld} provide testbeds for evaluating and training game agents. Game RL with outcome supervision then turns final rewards from games and puzzles into training signals for improving reasoning, transfer, or performance in specific games~\citep{ViGaL, G1, Enigmata, Spiral}. These signals are clean and verifiable, but remain coarse for long trajectories: they indicate whether an attempt succeeds, while providing limited guidance about which intermediate decisions mattered.

\paragraph{Credit Assignment and Process Signals in Agentic RL.}
To address the coarse credit provided by final outcomes, recent agentic RL methods construct process signals below the trajectory level, at the segment, turn, or token~\citep{Turn-ppo, SPO, Spa-rl}. Some methods redistribute outcome information through grouping or decomposition, combining trajectory-level advantages with step-level comparisons over recurring or comparable states~\citep{GiGPO, HGPO}. Others learn process evaluators for agent trajectories, including process reward models that estimate stepwise promise and progress~\citep{Agentprm}, and learned turn-level critics for multi-turn environments~\citep{Vagen}. Search- and rollout-based approaches instead estimate intermediate values by expanding possible continuations, producing value estimates or process labels for policy improvement~\citep{RAP, Vineppo, ProAct}. These signal sources expose a recurring trade-off: grouping methods require comparable intermediate states, learned verifiers and critics require additional training signals or models, and search or rollout estimates can add substantial computation or depend on the rollout policy. In contrast, our work uses game-specific solvers as an external source of turn-level supervision.

\paragraph{Logit-Free Distillation from Solvers and Search.}
\method is also related to distillation, but differs in the teacher signal it uses. Recent LLM on-policy distillation learns from student-generated outputs, but assumes access to teacher token distributions or log probabilities~\citep{GKD, Minillm, OPD}. Classical solvers instead provide structured signals such as actions, feasibility, and cost-to-go values, connecting our setting to traditional imitation learning and search distillation. DAgger queries an oracle on learner-visited states~\citep{DAgger}, AggreVaTe supervises actions with cost-to-go estimates~\citep{AggreVaTe}, and expert-iteration methods distill search-improved decisions into a policy~\citep{ExIt, AlphaZero}. These methods target task-specific policies, whereas we attach scalar solver advantage estimates to LLM-sampled actions, yielding a logit-free process signal that plugs directly into RLVR.

\section{Datasets and Benchmarks}
\label{app:data_and_benchmarks}

\subsection{Dataset Details}
\label{app:dataset_details}

\paragraph{Data generation.}
Each game uses a procedural generator that produces instances at a controllable difficulty. \textbf{Sokoban} rooms are grown by random wall placement and then populated by a reverse-playing search, which guarantees a solvable layout in which at least one box must be pushed. \textbf{Minesweeper} boards place mines uniformly at random outside a $3{\times}3$ safe zone around a designated first-click cell, which the environment reveals at the start of every episode so that the board is fixed from the agent's first action onward; we do not filter for no-guess solvability, so a few boards may still require a probabilistic guess, which the oracle handles as described in Appendix~\ref{app:solver_impl}. \textbf{Rush Hour} boards are sampled at random and kept only if an exact solver confirms an optimal solution whose length lies in a target range; this solver is used only for data construction and differs from the training oracle of Appendix~\ref{app:solver_impl}.

\paragraph{Difficulty and splits.}
Difficulty is controlled by board size and a game-specific count: boxes for Sokoban, mines for Minesweeper, and total vehicles for Rush Hour. For each game, we train on a single in-domain (ID) tier and hold out a harder unseen-difficulty tier, obtained by increasing this count and, for Sokoban and Minesweeper, enlarging the board; both tiers are listed in Table~\ref{tab:dataset_difficulty}. Each tier provides $200$ evaluation instances. All instances are drawn from independent random seeds and de-duplicated by a hash of their initial board, so that no instance recurs within the training set and no evaluation instance appears in training. The textual state encoding and action interface of each environment are described in Appendix~\ref{app:env_impl}.

\begin{table}[h]
\centering
\small
\caption{Per-game in-domain (ID) and unseen-difficulty (Unseen) tiers. Agents are trained only on ID and evaluated on both tiers; for Rush Hour, the count includes the target vehicle \texttt{A}.}
\label{tab:dataset_difficulty}
\begin{tabular}{llcc}
\toprule
\textbf{Game} & \textbf{Split} & \textbf{Board} & \shortstack{\textbf{\# Boxes, Mines,} \\ \textbf{or Vehicles}} \\
\midrule
\multirow{2}{*}{Sokoban}
    & ID   & $6{\times}6$ & 2 boxes \\
    & Unseen & $7{\times}7$ & 3 boxes \\
\midrule
\multirow{2}{*}{Minesweeper}
    & ID   & $6{\times}6$ & 7 mines \\
    & Unseen & $7{\times}7$ & 10 mines \\
\midrule
\multirow{2}{*}{Rush Hour}
    & ID   & $6{\times}6$ & 7 vehicles \\
    & Unseen & $6{\times}6$ & 9 vehicles \\
\bottomrule
\end{tabular}
\end{table}

\subsection{OOD Transfer Benchmark Details}
\label{app:benchmark_details}

To measure cross-domain transfer, we take each agent trained on the three games and evaluate it zero-shot, with no further training, on two standard text-agent benchmarks, ALFWorld and WebShop. Both are driven by the same multi-turn loop and single-action interface as the games (Appendix~\ref{app:agent_impl}), differing only in their observations and action space, so that any success reflects abilities transferred from game training rather than benchmark-specific tuning.

\paragraph{ALFWorld.}
ALFWorld~\citep{Alfworld} is an embodied household benchmark in which the agent completes a natural-language instruction (e.g.\ heating an object and placing it) by navigating and manipulating objects in a text-rendered room. We use its TextWorld engine over the six standard task types and evaluate on the \emph{seen} validation split. Each turn presents the current observation together with the list of admissible commands, and the agent replies with one command such as \texttt{go to}, \texttt{take}, \texttt{open}, or \texttt{heat}; an invalid command is rejected with feedback and the episode continues. An episode runs for up to $30$ turns.

\paragraph{WebShop.}
WebShop~\citep{Webshop} is a simulated online store in which the agent must fulfill a shopping instruction by searching, browsing, and purchasing a product that matches the requested attributes and price. The text observation renders the current page, and the agent acts through \texttt{search[query]} and \texttt{click[button]}, choosing from the clickable elements listed for that page. We evaluate on the standard $500$ test instructions, with each episode capped at $30$ turns. WebShop reports two metrics, a continuous match score and a binary success rate; we use the success rate, counting an instruction as solved only on a perfect match.

Both benchmarks are decoded with the same open-source configuration as the games (Table~\ref{tab:eval_open}): SGLang serving, temperature $0.6$, top-$p$ $0.95$, up to $16{,}384$ tokens per turn and $2{,}048$ prompt tokens, and $4$ rollouts per instance.

\subsection{OOD Transfer Case Study}
\label{app:webshop_case}

We compare the base model \textit{Qwen3-4B-Instruct-2507} and its Sokoban-trained model on the same WebShop instruction; the latter receives only solver-guided Sokoban training before zero-shot evaluation. The instruction asks, \say{\emph{Find me easy clean sofa tables with solid wood for living room with color: walnut, and price lower than 260.00 dollars}}. WebShop credits a purchase only on a full attribute match; here, the \texttt{walnut} option must be selected on the product page before the irreversible \texttt{buy now}. The base model commits to an invalid brown table, whereas the \method-trained model rejects a poor candidate, searches again, and selects the required color before buying. This contrast illustrates recovery from an intermediate mistake and constraint checking before commitment, behaviors consistent with the planning discipline learned in Sokoban. Every action is retained, while some reasoning is omitted for brevity and marked as \say{[\dots]}.

\begin{cstraj}{csfail}{\faTimesCircle~~Base model (untrained) \hfill 3 steps \; -- \; reward 0.0 \; \faTimesCircle}
\csstep{0}\csobs{search page.}
\csthink{A good search term would be `solid wood sofa table' as it captures the essential product type and material. This avoids over-specificity and allows the system to return relevant results that can be filtered later for color and price.}
\csact{search[solid wood sofa table]}

\csstep{1}\csobs{50 results; none of the listed titles says \emph{walnut}.}
\csthink{[\dots] The most suitable product that meets the solid wood and price criteria is the `Wyndenhall' table, even though it's brown. It is solid wood, priced under \$260, and fits the purpose.}
\csact{click[b084hbqytd]}

\csstep[\textcolor{cspivotc}{\faStar}]{2}\csobs{Wyndenhall product page (\say{Natural Aged Brown}); it exposes \emph{no} color option.}
\csthink{[\dots] We cannot do a second one with `walnut' because that would include color --- which violates the rule. Therefore, we cannot search again. [\dots] we should proceed with buying the best available match.}
\csnote{csfail}{over-applies a prompt tip, refuses to explore, and commits the irreversible \texttt{buy} on a page that can never satisfy the goal.}
\csact{click[buy now]\hfill\textcolor{csfail}{\bfseries wrong item -- reward 0.0}}
\end{cstraj}

\begin{cstraj}{csok}{\faCheckCircle~~\method (Sokoban-trained, zero-shot) \hfill 6 steps \; -- \; reward 1.0 \; \faCheckCircle}
\csstep{0}\csobs{search page.}
\csthink{Since I cannot include size or color in the search term (as per tips), I should search for `sofa tables solid wood' to get relevant products. [\dots]}
\csact{search[sofa tables solid wood]}

\csstep{1}\csobs{50 results; none of the titles says \emph{walnut}.}
\csthink{The best match in terms of solid wood and low price is the `SunnyPoint Classic Side Table' (B084ZPN17C), even though it's not walnut. [\dots] I will click on the SunnyPoint table, assuming it might be a close enough match for the task.}
\csact{click[b084zpn17c]}

\csstep[\textcolor{cspivotc}{\faStar}]{2}\csobs{SunnyPoint side table page --- black, metal frame, not solid walnut wood.}
\csthink{The current product does not meet the color requirement. I should not proceed with this one. I need to go back to search with a refined term that includes `walnut' to get better results. [\dots] I will now search for `sofa tables walnut solid wood' to target the exact color and material.}
\csnote{csok}{rejects a bad candidate and spends its second search to recover, rather than settling.}
\csact{search[sofa tables walnut solid wood]}

\csstep{3}\csobs{re-ranked results; the Baja solid-wood table (\$237) now appears near the top.}
\csthink{The Baja Night Stand (B079N3VLRJ) is solid wood, priced under \$260, and is suitable for living room use. [\dots] I will proceed to click on it as the best available option within constraints.}
\csact{click[b079n3vlrj]}

\csstep[\textcolor{cspivotc}{\faStar}]{4}\csobs{Baja product page exposes a color selector: \texttt{brown / grey / \textbf{walnut} / white}.}
\csthink{The available color options in the page show: brown, grey, walnut, white. I should click on the `walnut' color option to see if any product with walnut finish appears. [\dots] This is the best next step to fulfill the color requirement.}
\csnote{csok}{sees a \texttt{walnut} option and selects it \emph{before} buying --- the move that actually registers the required attribute.}
\csact{click[walnut]}

\csstep{5}\csobs{page looks unchanged after selecting \texttt{walnut}.}
\csthink{Clicking on `walnut' did not change the product listing [\dots] I should buy this product as the best available option. Therefore, I will proceed with `Buy Now' for this product.}
\csact{click[buy now]\hfill\textcolor{csok}{\bfseries perfect match -- reward 1.0}}
\end{cstraj}

\section{Implementation Details and Hyperparameters}
\label{app:detailed_implementation}

\subsection{Agent Implementation}
\label{app:agent_impl}

All three games share a single agent scaffold, with game-specific system prompts, observation formatting, action parsers, and feedback. At each turn the agent appends the environment's text observation of the current board to the context and asks the policy to reason and then commit to one action, which it parses and passes back to the environment; the loop repeats until the game is solved, fails, or reaches its turn budget.

\paragraph{Observation and action format.}
Each turn shows the current board together with the number of remaining steps, and, when the previous action was invalid or ineffective, a short line of corrective feedback. Sokoban and Rush Hour render the board as a symbolic grid paired with a coordinate listing of each entity, while Minesweeper shows the symbolic grid with row and column index headers but no separate coordinate listing. The policy is asked to reason and then place a single action inside a fenced block, such as \verb|```Up```| for Sokoban, \verb|```reveal 3 2```| for Minesweeper, or \verb|```A+2```| for Rush Hour. We read the last fenced block in the response, so intermediate reasoning is ignored, and parse it with a per-game rule into a concrete action.

\paragraph{History and trajectories.}
The agent conditions on the full context history, including its own reasoning, so that turn $t$ sees every earlier observation and response. The resulting trajectory, the ordered sequence of observations, responses, actions, and the terminal $0/1$ reward, is the token-level rollout optimized by the RL objectives in Appendix~\ref{app:hyperparams_train}.

\paragraph{Invalid actions and termination.}
An unparseable or illegal action, such as an unknown token, an out-of-bounds cell, or a move blocked by a wall, does not end the episode: it leaves the state unchanged, consumes one turn, and returns corrective feedback so the policy can retry. An episode ends only when the game is solved, a game-specific failure occurs (a mine hit in Minesweeper, or a deadlock in Sokoban), or the turn budget is exhausted, and yields reward $1$ only on success.

\paragraph{System prompts.}
Each environment uses a fixed system prompt that specifies its rules and action format. For reproducibility, we list the exact prompt of every environment below: the three training games first, followed by the two OOD transfer benchmarks.

\promptbox{Sokoban}{prompts/sokoban.txt}
\promptbox{Minesweeper}{prompts/minesweeper.txt}
\promptbox{Rush Hour}{prompts/rush_hour.txt}
\promptbox{ALFWorld}{prompts/alfworld.txt}
\promptbox{WebShop}{prompts/webshop.txt}

\subsection{Environment Implementation}
\label{app:env_impl}

The three environments share the same interface but differ in their state encoding, action semantics, and termination rules. Sokoban and Rush Hour pair a symbolic grid with an explicit zero-indexed coordinate listing, whereas Minesweeper uses a symbolic grid with indexed rows and columns. All environments give a single sparse terminal reward of $1$ on success and $0$ otherwise.

\paragraph{Sokoban.}
The board uses the symbols \texttt{\#} (wall), \texttt{\_} (empty), \texttt{O} (target), \texttt{X} (box), \texttt{*} (box on target), \texttt{P} (player), and \texttt{S} (player on target). The four actions \texttt{Up/Down/Left/Right} move the player, pushing a box one cell when the cell behind it is free; boxes cannot be pulled and walls are impassable. The episode ends in success when every box covers a target, or otherwise on a detected deadlock or the step budget.

\paragraph{Minesweeper.}
Only revealed information is shown: an unrevealed cell is \texttt{.}, a flag is \texttt{F}, a revealed count is a digit \texttt{0}--\texttt{8}, and a mine is \texttt{*} (shown only when hit); a numeric header indexes the rows and columns. The actions are \texttt{reveal R C} and \texttt{flag R C}, where a flag toggles a marker and revealing a $0$ cell flood-fills its zero-valued neighborhood. The episode ends in failure on hitting a mine, in success once every non-mine cell is revealed, or on the step budget. Each instance fixes a latent mine layout and a first-click cell that the environment reveals before the agent acts, so every rollout starts from the same partially revealed board; this removes the initial blind guess that standard Minesweeper would otherwise require.

\paragraph{Rush Hour.}
Vehicles are labeled \texttt{A}, \texttt{B}, \texttt{C}, \dots by index, with the target red car \texttt{A} horizontal on the exit row; empty cells are \texttt{.}, walls are \texttt{x}, and the exit lies at the right edge of \texttt{A}'s row. An action \texttt{$\langle$car$\rangle\langle\pm\rangle\langle$steps$\rangle$} slides one vehicle along its orientation ($+$ is right or down, $-$ is left or up) by a given number of cells, and is all-or-nothing: it executes only if every intermediate cell is free, so a blocked or overshooting move leaves the board unchanged. The episode ends in success when \texttt{A} reaches the exit, or otherwise when the step budget is exhausted.

\subsection{Solver Implementation}
\label{app:solver_impl}

Each game provides a solver-derived cost $N(s)$ that measures the remaining work from state $s$. We define the potential $\Phi(s)=-N(s)$ and per-step oracle advantage $A(s_t,a_t)=\Phi(s_{t+1})-\Phi(s_t)=N(s_t)-N(s_{t+1})$. In the shortest-path domains, this gives $+1$ to an optimal move, $0$ to a wasteful move, and a negative score to harmful moves. Minesweeper uses the analogous solver-completion cost defined below. The solver is queried on states visited along each rollout.

\paragraph{Sokoban.}
We compute $N(s)$ by A$^\ast$ over states $(\textit{player}, \{\textit{boxes}\})$ with a minimum-cost box-to-target assignment as an admissible heuristic. The search returns the shortest solution within its node budget and remains cheap because the training tiers have few boxes on small grids. The same solver also supports the deadlock check used for optional early termination.

\paragraph{Minesweeper.}
Partial observability makes an optimal deterministic cost-to-go ill-defined, so here $K(s)$ measures the work of a deterministic peek-free solver. Using constraint reasoning and global mine probabilities, $K(s)$ is the number of reveal actions needed to clear all safe cells, or $+\infty$ if the solver rollout fails, including when a required probabilistic reveal hits a mine. Because each instance fixes its latent mine layout and first-click reveal (Appendix~\ref{app:env_impl}), $K(s)$ is deterministic for that instance while the agent remains partially observed. The oracle advantage is $K(s_t)-K(s_{t+1})$.

\paragraph{Rush Hour.}
The training oracle uses a different solver from the one that generates the data. Data generation validates each board with a bounded IDA$^\ast$ search (Appendix~\ref{app:dataset_details}). For training, we instead precompute a shortest-path table over the board's reachable component, giving exact $N(s)$ lookups within the configured state cap; larger components fall back to bounded IDA$^\ast$. One slide of any distance counts as a single move.

\begin{figure}[t]
    \centering
    \includegraphics[width=\linewidth]{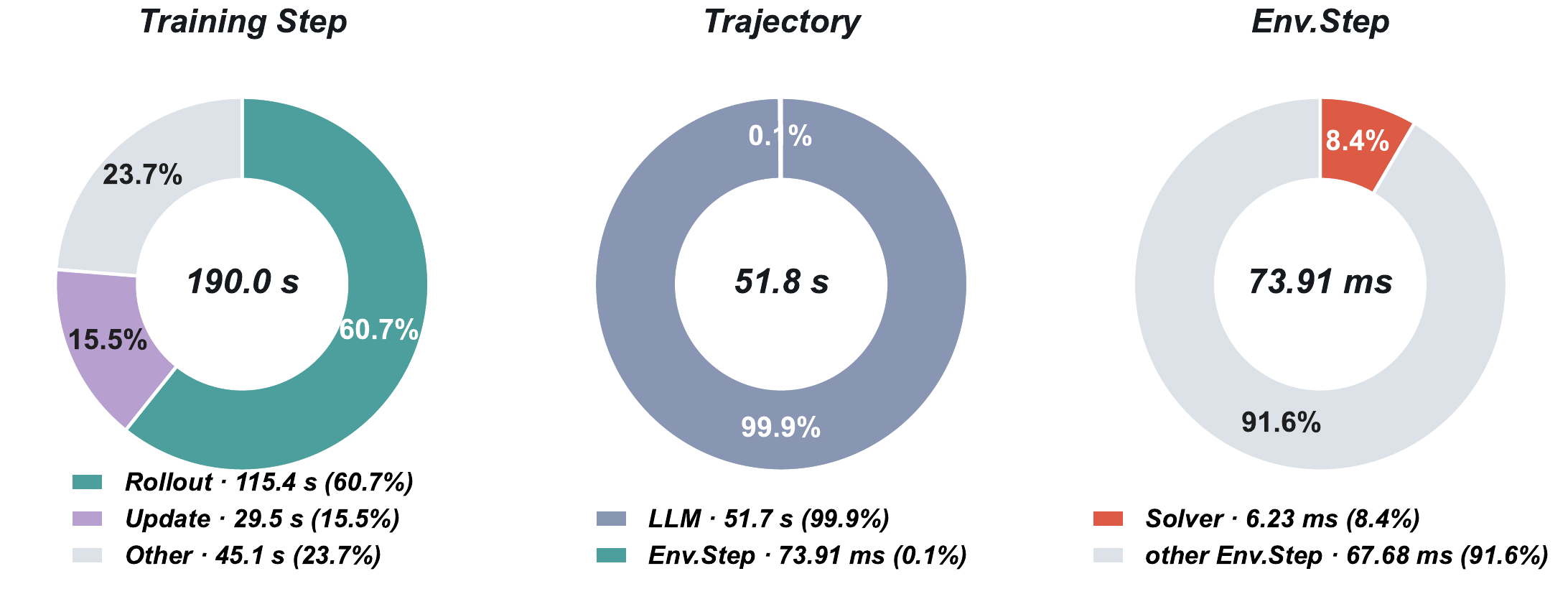}
    \includegraphics[width=0.72\linewidth]{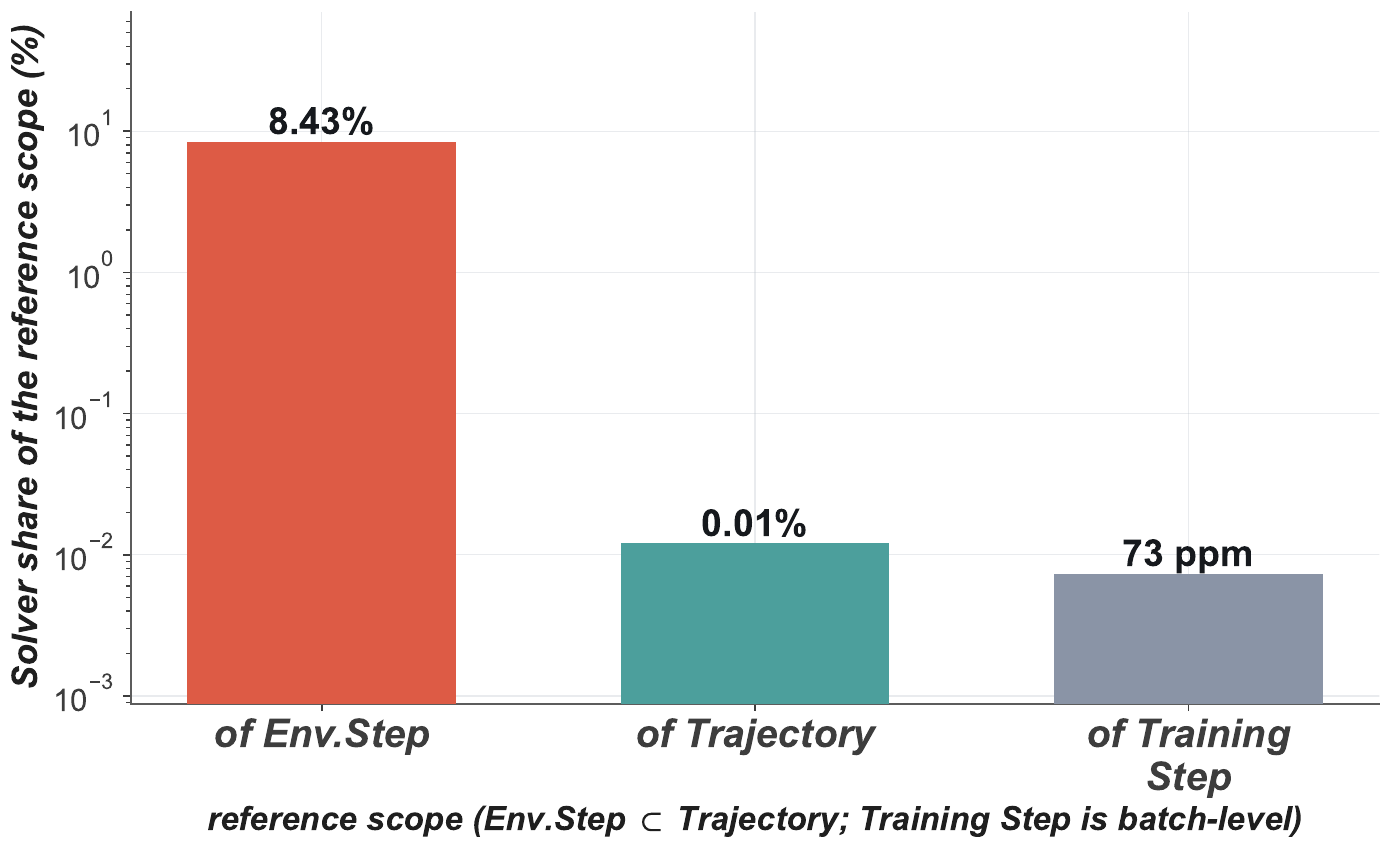}
    \caption{\textbf{Solver overhead on Sokoban.} \textbf{Top}: wall-clock breakdowns at three granularities. The left pie decomposes an average batch-level optimizer step ($190.0$\,s), which includes the parallel collection of $128$ trajectories ($16$ prompts $\times\,8$ rollouts), the policy update, and other operations. Rollout collection, policy update, and other operations account for $60.7\%$, $15.5\%$, and $23.7\%$ of step time, respectively; the last category consists mainly of periodic validation and checkpointing. The middle pie shows that LLM generation accounts for $99.9\%$ of the runtime of a single trajectory, while the right pie shows that solver queries account for $8.4\%$ of a single environment step. \textbf{Bottom}: the solver's time share across increasingly broad scopes on a logarithmic scale. Accounting for parallel rollout collection, its estimated share decreases from $8.43\%$ of an environment step to $0.01\%$ of a trajectory and $73$ ppm of a training step.}
    \label{fig:solver_time}
\end{figure}

\subsection{Learned Value Network as a Solver}
\label{app:dqn}

To test whether an exact solver is required at deployment, we replace the Rush Hour oracle with a DQN-based value network. A masked Double DQN encodes the $6{\times}6$ board with a small transformer and outputs one $Q$-value per action; illegal actions are masked, and $V(s)=\max_{a\in\mathcal{A}(s)}Q(s,a)$. The backend returns the cost surrogate $\widehat{N}(s)=-\text{scale}\cdot V(s)$, so the environment computes $A(s_t,a_t)=\widehat{N}(s_t)-\widehat{N}(s_{t+1})=\text{scale}\,[V(s_{t+1})-V(s_t)]$. Solved states use a fixed value above the non-terminal network range, ensuring that a solving move receives positive advantage.

We train the network in three stages on the same underlying ID split used for LLM training, without generating new puzzle instances. The base network learns from a terminal success reward with a per-step penalty. We then apply two warm-started potential-regression stages: the first distills a one-step lookahead potential, while the second repeats the one-step target and adds the geometric prior $-0.05b(s)$, where $b(s)$ counts occupied cells between the red car and the exit. These optimization targets use game rewards, frozen network predictions, game dynamics, and the geometric prior, but no solver-distance targets. Hyperparameters are listed in Table~\ref{tab:dqn_hp}.

\begin{table}[h]
\centering
\begin{small}
\caption{Training and deployment configuration of the DQN-based value network. All stages use the same architecture and warm-start from the preceding checkpoint.}
\label{tab:dqn_hp}
\begin{tabular}{lc}
\toprule
\textbf{Setting} & \textbf{Value} \\
\midrule
Encoder                 & Transformer, $d_{\text{model}}{=}128$, $8$ heads, $3$ layers, $512$-d output \\
Base RL algorithm       & Masked Double DQN, $n$-step $3$, $\gamma{=}0.99$ \\
Base reward             & Terminal success indicator minus $0.01$ per step \\
Base training steps     & $3{\times}10^{6}$ \\
Distillation objective  & Two stages of one-step potential regression \\
Geometric prior blend   & $-0.05\times$ occupied path-cell count (second stage) \\
Distillation epochs     & $150$ per stage \\
Learning rate           & $10^{-4}$ (base) / $10^{-3}$, $5{\times}10^{-4}$ (distill) \\
Training boards         & $8{,}000$ puzzles from the ID training split \\
Deployment              & $\text{scale}{=}2$, solved-state value ${=}1.0$ \\
\bottomrule
\end{tabular}
\end{small}
\end{table}

\subsection{Equivalence to Cross-Entropy Distillation}
\label{app:kl_proof}

This section formalizes the distillation interpretation of the combined training rule~\autoref{eq:final_advantage}. We prove that, under four explicit assumptions, the implicit objective of the solver-shaped GRPO update is a task return regularized by the cross-entropy between the student and the solver. We then decompose this cross-entropy to reveal both a KL-divergence penalty and a mode-seeking entropy reduction, derive the closed-form optimal policy, characterize the full-domain behavior when the small-signal assumption is relaxed, and give a performance-improvement guarantee via the Performance Difference Lemma.

\subsubsection{Notation}

We work in the finite-horizon MDP $\mathcal{M}=(\mathcal{S},\mathcal{A},P,r,H)$ from \autoref{sec:preliminaries} with undiscounted returns ($\gamma=1$). Let $\pi_\theta$ denote the student (LLM) policy and $\pi_{\text{Solver}}$ the solver's implicit policy. We write $V^{\pi}_{\text{task}},Q^{\pi}_{\text{task}},A^{\pi}_{\text{task}}$ for the value, action-value, and advantage functions under the task reward, and $d^{\pi_\theta}(s)=\sum_{t=0}^{H-1}\Pr(s_t=s\mid\pi_\theta)$ for the (unnormalized) state occupancy measure of $\pi_\theta$, so that $\sum_s d^{\pi_\theta}(s)=H$. We define the following standard information-theoretic quantities for any state $s$:
\begin{align}
\mathcal{H}\big(\pi_\theta(\cdot|s)\big)&=-\textstyle\sum_a\pi_\theta(a|s)\log\pi_\theta(a|s), \label{eq:app_entropy}\\
\mathrm{KL}\big(\pi_\theta(\cdot|s)\,\|\,\pi_{\text{Solver}}(\cdot|s)\big)&=\textstyle\sum_a\pi_\theta(a|s)\log\frac{\pi_\theta(a|s)}{\pi_{\text{Solver}}(a|s)}, \label{eq:app_kl}\\
\mathrm{H}\big(\pi_\theta(\cdot|s),\,\pi_{\text{Solver}}(\cdot|s)\big)&=-\textstyle\sum_a\pi_\theta(a|s)\log\pi_{\text{Solver}}(a|s). \label{eq:app_cross_entropy}
\end{align}
The last quantity is the cross-entropy from $\pi_\theta$ to $\pi_{\text{Solver}}$. These three are related by the identity:
\begin{equation}
\mathrm{H}(\pi_\theta,\,\pi_{\text{Solver}})=\mathrm{KL}(\pi_\theta\,\|\,\pi_{\text{Solver}})+\mathcal{H}(\pi_\theta),
\label{eq:app_ce_decomp}
\end{equation}
which expresses the cross-entropy as the sum of the KL divergence and the entropy of $\pi_\theta$.

Throughout, we use the \emph{baseline invariance} property of policy gradients~\citep{PolicyGradient}: for any function $c(s)$ independent of the action, $\mathbb{E}_{a\sim\pi_\theta}[\nabla_\theta\log\pi_\theta(a|s)\cdot c(s)]=0$.

\subsubsection{Assumptions}
\label{app:assumptions}

\begin{assumption}[Soft-Optimal Solver]\label{ass:soft_optimal}
There exists a temperature $\tau>0$ such that the solver acts as the optimal policy of a soft (maximum-entropy) MDP~\citep{MaxEntIRL,SAC}:
\begin{equation}
\pi_{\text{Solver}}(a|s)=\exp\!\Big(\tfrac{1}{\tau}\big(Q^{\pi_{\text{Solver}}}(s,a)-V^{\pi_{\text{Solver}}}(s)\big)\Big),\qquad
V^{\pi_{\text{Solver}}}(s)=\tau\log\textstyle\sum_{a'}\exp\!\big(Q^{\pi_{\text{Solver}}}(s,a')/\tau\big).
\label{eq:app_soft_optimal}
\end{equation}
\end{assumption}

\begin{assumption}[Small-Signal Regime]\label{ass:small_signal}
The solver advantages encountered during training satisfy $|A^{\pi_{\text{Solver}}}|\lesssim 1$. Since most moves change the cost-to-go by $0$ or $\pm 1$, this holds for the vast majority of training steps. Under this condition, $\operatorname{asinh}(A^{\pi_{\text{Solver}}}+1)$ is well-approximated by its first-order Taylor expansion around $1$ (the value at an optimal move). This assumption is relaxed in \Cref{cor:robust_kl}.
\end{assumption}

\begin{assumption}[GRPO Consistency]\label{ass:grpo_consistency}
The group-relative advantage $\hat{A}^{\text{outcome}}_i$~\eqref{eq:outcome_advantage} is an unbiased estimator of the task advantage $A^{\pi_\theta}_{\text{task}}$ up to a positive scaling constant (absorbed into the learning rate).
\end{assumption}

\begin{assumption}[Frozen Visitation Surrogate]\label{ass:frozen_visit}
Following standard practice in PPO-style methods~\citep{PPO}, the shaping term $\alpha\,h(\widetilde{A}^{\pi_{\text{Solver}}}_{i,t})$ is treated as a per-step immediate weight, and its policy-gradient contribution is computed under a surrogate objective that holds the state visitation distribution $d^{\pi_\theta}$ fixed at the current iterate.
\end{assumption}

\subsubsection{Lemmas}

\begin{tcolorbox}[colback=gray!3,colframe=black,boxrule=0.5pt,arc=2pt,left=6pt,right=6pt,top=4pt,bottom=4pt]
\begin{lemma}[Advantage--Log-Probability Identity]\label{lem:adv_logprob}
Under \Cref{ass:soft_optimal}, for all $(s,a)$:
\begin{equation}
A^{\pi_{\text{Solver}}}(s,a)=\tau\,\log\pi_{\text{Solver}}(a|s).
\label{eq:app_adv_eq_logprob}
\end{equation}
\end{lemma}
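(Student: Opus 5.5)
The identity follows directly from the definition of the advantage together with \Cref{ass:soft_optimal}, so the plan is a short algebraic argument. First, I will recall that the solver advantage is defined as $A^{\pi_{\text{Solver}}}(s,a)=Q^{\pi_{\text{Solver}}}(s,a)-V^{\pi_{\text{Solver}}}(s)$, as in \autoref{eq:solver_advantage}. Then I will take the logarithm of both sides of the first equation in \autoref{eq:app_soft_optimal}. This gives $\log\pi_{\text{Solver}}(a|s)=\tfrac{1}{\tau}\big(Q^{\pi_{\text{Solver}}}(s,a)-V^{\pi_{\text{Solver}}}(s)\big)$. Multiplying by $\tau>0$ yields \autoref{eq:app_adv_eq_logprob} for every $(s,a)$.

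To make the step legitimate, I will check that the logarithm is well defined. The soft-optimal form gives $\pi_{\text{Solver}}(a|s)=\exp(\cdot)>0$ for every action whose $Q$-value is finite, so $\log\pi_{\text{Solver}}$ is finite there. I will also verify that the displayed $\pi_{\text{Solver}}$ is a genuine distribution. Substituting the log-sum-exp expression for $V^{\pi_{\text{Solver}}}$ gives $\sum_a \exp\big((Q-V)/\tau\big)=\sum_a e^{Q/\tau}\big/\sum_{a'}e^{Q/\tau}=1$. This confirms that the $V$ appearing in the exponent is exactly the normalizer, and hence that the same $V$ defines the advantage.

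The only point needing care is whether the $Q^{\pi_{\text{Solver}}},V^{\pi_{\text{Solver}}}$ in the assumption coincide with those used to define $A^{\pi_{\text{Solver}}}$ in \autoref{eq:solver_advantage}. In the shortest-path construction there, $V=-N$ and the hard max gives $\max_a A=0$. Under the assumption, however, $V$ is a soft maximum, so $\sum_a \exp(A/\tau)=1$ rather than $\max_a A=0$. My plan is to treat \Cref{ass:soft_optimal} as stipulating that the solver's value functions are the soft ones, so that $A$ is defined relative to the soft $V$, and to note this convention explicitly. Dead states with $N=\infty$ would give probability $0$ and advantage $-\infty$, consistently. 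Under this reading the lemma is exact, and the hard-max values used in the main text are recovered in the limit $\tau\to 0$.
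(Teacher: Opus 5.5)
Your proposal is correct and matches the paper's proof: take the logarithm of the soft-optimal form $\pi_{\text{Solver}}(a|s)=\exp\big((Q^{\pi_{\text{Solver}}}(s,a)-V^{\pi_{\text{Solver}}}(s))/\tau\big)$, identify $Q-V$ as the advantage, and multiply by $\tau$. Your extra checks are sound refinements that the paper does not spell out: positivity so that the logarithm is finite, normalization via the log-sum-exp $V$, and the remark that this soft $V$ differs from the hard shortest-path $V=-N$ of the main text except in the limit $\tau\to 0$.
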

\end{tcolorbox}
\begin{proof}
Taking the logarithm of \autoref{eq:app_soft_optimal} gives
$\log\pi_{\text{Solver}}(a|s)=\frac{1}{\tau}\big(Q^{\pi_{\text{Solver}}}(s,a)-V^{\pi_{\text{Solver}}}(s)\big)=\frac{1}{\tau}A^{\pi_{\text{Solver}}}(s,a)$.
Multiplying both sides by $\tau$ yields \autoref{eq:app_adv_eq_logprob}.
\end{proof}

\noindent\textbf{Intuition.} Under a soft-optimal policy, how much better an action is than average ($A^{\pi_{\text{Solver}}}$) and how strongly the teacher prefers it ($\log\pi_{\text{Solver}}$) are the same quantity up to the temperature $\tau$. This translates the advantage signal into a teacher log-preference signal.

\begin{tcolorbox}[colback=gray!3,colframe=black,boxrule=0.5pt,arc=2pt,left=6pt,right=6pt,top=4pt,bottom=4pt,breakable]
\begin{lemma}[Shaping Reduces to Scaled Log-Probability]\label{lem:shaping_linear}
Under Assumptions~\ref{ass:soft_optimal} and~\ref{ass:small_signal}, the shaped solver term in \autoref{eq:final_advantage} satisfies:
\begin{equation}
\alpha\,h\!\big(\widetilde{A}^{\pi_{\text{Solver}}}_{i,t}\big)
=\beta\,\log\pi_{\text{Solver}}(a_{i,t}|s_{i,t})+c_0+O\!\big((A^{\pi_{\text{Solver}}})^2\big),
\label{eq:app_shape_logprob}
\end{equation}
where $\beta:=\alpha\tau\big/\big(\sqrt{2}\,(\mathrm{RMS}_{\mathcal{B}}(g)+\epsilon)\big)>0$ is a batch-level constant and $c_0$ is an action-independent constant that vanishes from the policy gradient by baseline invariance.
\end{lemma}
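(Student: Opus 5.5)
The argument is a first-order Taylor expansion of the shaping map around an optimal move, composed with \Cref{lem:adv_logprob}. First, unwind the definitions. By \autoref{eq:shifted_solver_advantage}, the shifted advantage is $\widetilde{A}^{\pi_{\text{Solver}}}_{i,t}=1+A^{\pi_{\text{Solver}}}(s_{i,t},a_{i,t})$. By \autoref{eq:rms_norm}, the shaped term is
\[
\alpha\,h(\widetilde{A}^{\pi_{\text{Solver}}}_{i,t})=\frac{\alpha\,g(1+A)}{\mathrm{RMS}_{\mathcal{B}}(g)+\epsilon}, \qquad A:=A^{\pi_{\text{Solver}}}(s_{i,t},a_{i,t}).
\]
The denominator is a batch-level quantity that does not depend on the individual action $a_{i,t}$. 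I would treat it as a constant $D:=\mathrm{RMS}_{\mathcal{B}}(g)+\epsilon$, in the same frozen sense used in \Cref{ass:frozen_visit}. A single sample's influence on $D$ is of order $1/|\mathcal{B}|$, so this is a remark rather than a real gap.

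Next, expand $g=\operatorname{asinh}$ around $x=1$, the value taken by an optimal move. We have $g(1)=\ln(1+\sqrt2)$ and $g'(x)=1/\sqrt{x^2+1}$, so $g'(1)=1/\sqrt2$. The second derivative $g''(x)=-x/(x^2+1)^{3/2}$ is bounded on all of $\mathbb{R}$, so Taylor's theorem with Lagrange remainder gives
\[
g(1+A)=\ln(1+\sqrt2)+\frac{A}{\sqrt2}+O(A^2),
\]
with a remainder constant at most $\sup|g''|/2$, uniformly in $A$. \Cref{ass:small_signal} ($|A|\lesssim1$) is what makes the quadratic remainder genuinely subleading.

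Then apply \Cref{lem:adv_logprob}, $A=\tau\log\pi_{\text{Solver}}(a_{i,t}|s_{i,t})$, to the linear term. Multiplying by $\alpha/D$ gives
\[
\frac{\alpha A}{\sqrt2\,D}=\beta\log\pi_{\text{Solver}}(a_{i,t}|s_{i,t}),
\]
with exactly the stated $\beta$. The zeroth-order term gives $c_0=\alpha\ln(1+\sqrt2)/D$. This depends on neither the action nor the state, so it is certainly a baseline and vanishes under $\mathbb{E}_{a\sim\pi_\theta}[\nabla_\theta\log\pi_\theta\cdot c_0]=0$. Positivity of $\beta$ follows from $\alpha,\tau>0$ and $D>0$. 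The remainder is $(\alpha/D)\,O(A^2)$, which is $O(A^2)$ for fixed batch constants.

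The main obstacle is a tension between the assumptions. Under \Cref{ass:soft_optimal}, $A=\tau\log\pi_{\text{Solver}}\le0$, whereas the shifted-advantage construction in the main text allows positive progress values. So "small signal" has to be read as the near-optimal regime, where $\log\pi_{\text{Solver}}$ is close to $0$. I would state explicitly that the expansion point $x=1$ corresponds to $\pi_{\text{Solver}}(a|s)\approx1$. I would also flag that the error term is controlled only in the regime where $\tau|\log\pi_{\text{Solver}}|\lesssim1$. Actions the solver deems very unlikely fall outside this regime and are deferred to \Cref{cor:robust_kl}.
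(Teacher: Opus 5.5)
Your proposal is correct and follows the paper's proof essentially step for step: write $\alpha h(\widetilde A)=\frac{\alpha}{\mathrm{RMS}_{\mathcal{B}}(g)+\epsilon}\operatorname{asinh}(1+A)$, Taylor-expand $\operatorname{asinh}$ at $1$ to get $\operatorname{asinh}(1)+A/\sqrt{2}+O(A^2)$, and substitute $A=\tau\log\pi_{\text{Solver}}$ from \Cref{lem:adv_logprob} to read off $\beta$ and $c_0$. Two of your additions go beyond the paper and are sound: the uniform remainder bound via $\sup|g''|<\infty$, and the remark that the expansion point corresponds to $\pi_{\text{Solver}}\approx 1$. The sign tension you flag is milder than stated, though: a progress value $\widetilde A=1$ corresponds to $A=0$, so a clash with $A\le 0$ only arises when $\widetilde A>1$, and exact shortest-path costs rule that out.
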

\end{tcolorbox}
\begin{proof}
Recall that $\widetilde{A}=A^{\pi_{\text{Solver}}}+1$ (\autoref{eq:shifted_solver_advantage}) and $h(x)=g(x)/(\text{RMS}+\epsilon)$ with $g=\operatorname{asinh}$ (\autoref{eq:rms_norm}). Write $\beta'=\alpha/(\text{RMS}+\epsilon)$, so $\alpha\,h(\widetilde{A})=\beta'\,\operatorname{asinh}(A+1)$. Under \Cref{ass:small_signal}, Taylor-expand around the optimal-move point $A=0$ (i.e., $\widetilde{A}=1$). Using $\operatorname{asinh}'(x)=1/\sqrt{x^2+1}$:
\[
\operatorname{asinh}(A+1)=\operatorname{asinh}(1)+\frac{1}{\sqrt{1^2+1}}\cdot A+O(A^2)=\operatorname{asinh}(1)+\frac{A}{\sqrt{2}}+O(A^2).
\]
Thus $\alpha\,h(\widetilde{A})=\underbrace{\beta'\operatorname{asinh}(1)}_{c_0}+\frac{\beta'}{\sqrt{2}}\,A+O(A^2)$. By \Cref{lem:adv_logprob}, $A=\tau\log\pi_{\text{Solver}}$, so the action-dependent term becomes $\frac{\beta'\tau}{\sqrt{2}}\log\pi_{\text{Solver}}=\beta\log\pi_{\text{Solver}}$.
\end{proof}

\noindent\textbf{Intuition.} The $\operatorname{asinh}$ function, expanded around the optimal-move baseline $\widetilde{A}=1$, is locally linear in $A^{\pi_{\text{Solver}}}$. Combined with the batch-constant RMS normalization, the shaped signal reduces to $\beta\cdot\log\pi_{\text{Solver}}$, a constant coefficient times the teacher's log-preference.

\subsubsection{Main Theorem}

We now prove \autoref{thm:main} from \autoref{sec:method}, restated here for convenience.

\begin{tcolorbox}[colback=gray!3,colframe=black,boxrule=0.5pt,arc=2pt,left=6pt,right=6pt,top=4pt,bottom=4pt,breakable]
\textbf{\autoref{thm:main}} (Implicit Objective of CAST)\textbf{.}\;
Under Assumptions~\ref{ass:soft_optimal}--\ref{ass:frozen_visit}, the policy-gradient update direction of the training rule~\autoref{eq:final_advantage} equals the gradient of:
\begin{equation}
\mathcal{J}_{\text{ours}}(\theta)
=\mathbb{E}_{s_0\sim\mu}\!\big[V^{\pi_\theta}_{\text{task}}(s_0)\big]
-\beta\,\mathbb{E}_{s\sim d^{\pi_\theta}}\!\big[\mathrm{H}\big(\pi_\theta(\cdot|s),\,\pi_{\text{Solver}}(\cdot|s)\big)\big],
\,\, \beta=\frac{\alpha\,\tau}{\sqrt{2}\,(\mathrm{RMS}_{\mathcal{B}}(g)+\epsilon)}\,.
\label{eq:app_main_thm}
\end{equation}
\end{tcolorbox}
\begin{proof}
We assemble the lemmas in four steps.

\emph{Step~1 (Additivity).}
The training rule uses $\hat{A}_{i,t}=\hat{A}^{\text{outcome}}_i+\alpha\,h(\widetilde{A}^{\pi_{\text{Solver}}}_{i,t})$ as the advantage weight. Since the policy gradient is linear in the advantage weight, the update decomposes as $g_{\text{ours}}=g_{\text{GRPO}}+g_{\text{shape}}$.

\emph{Step~2 (GRPO $\to$ task return).}
By \Cref{ass:grpo_consistency} and the policy gradient theorem~\citep{PolicyGradient}:
\[g_{\text{GRPO}}\propto\nabla_\theta\,\mathbb{E}_{s_0\sim\mu}[V^{\pi_\theta}_{\text{task}}(s_0)].\]

\emph{Step~3 (Shaping $\to$ negative cross-entropy).}
By \Cref{lem:shaping_linear}, the shaping weight reduces to $\beta\log\pi_{\text{Solver}}(a|s)$ plus an action-independent constant. Since $\log\pi_{\text{Solver}}$ does not depend on $\theta$, by \Cref{ass:frozen_visit} the shaping term is the gradient of a surrogate:
\[
g_{\text{shape}}=\nabla_\theta\Big(\beta\,\mathbb{E}_{s\sim d^{\pi_\theta},\,a\sim\pi_\theta}\big[\log\pi_{\text{Solver}}(a|s)\big]\Big)
=-\nabla_\theta\Big(\beta\,\mathbb{E}_{s\sim d^{\pi_\theta}}\big[\mathrm{H}\big(\pi_\theta(\cdot|s),\,\pi_{\text{Solver}}(\cdot|s)\big)\big]\Big),
\]
where the last equality uses the definition of cross-entropy~\autoref{eq:app_cross_entropy}: $\mathbb{E}_{a\sim\pi_\theta}[\log\pi_{\text{Solver}}(a|s)]=-\mathrm{H}(\pi_\theta,\pi_{\text{Solver}})$.

\emph{Step~4 (Combine).}
Summing Steps~2 and~3 directly yields~\autoref{eq:app_main_thm}.
\end{proof}

\noindent\textbf{Reading the theorem.}
GRPO provides the ``do the task right'' gradient; the shaped solver advantage provides the ``act like the teacher'' gradient. The teacher's signal enters through the cross-entropy $\mathrm{H}(\pi_\theta,\pi_{\text{Solver}})$, the standard objective for knowledge distillation~\citep{GKD}. By the decomposition~\autoref{eq:app_ce_decomp}, minimizing the cross-entropy is equivalent to simultaneously minimizing $\mathrm{KL}(\pi_\theta\|\pi_{\text{Solver}})$ (move toward the solver) and minimizing $\mathcal{H}(\pi_\theta)$ (reduce entropy). The latter makes the objective \emph{mode-seeking}: the student concentrates probability on the solver's preferred actions rather than spreading mass across all modes. This is the natural behavior of reverse-KL distillation.

\begin{remark}[Relation to KL-regularized objectives]\label{rem:kl_relation}
Defining $J(\theta)=\mathbb{E}_\mu[V^{\pi_\theta}_{\text{task}}]-\beta\,\mathbb{E}_{d^{\pi_\theta}}[\mathrm{KL}(\pi_\theta\|\pi_{\text{Solver}})]$, the implicit objective can equivalently be written as
\begin{equation}
\mathcal{J}_{\text{ours}}(\theta)=J(\theta)-\beta\,\mathbb{E}_{d^{\pi_\theta}}[\mathcal{H}(\pi_\theta)].
\label{eq:app_j_kl_form}
\end{equation}
The additional $-\beta\mathcal{H}$ relative to the standard KL-regularized objective $J(\theta)$ is not a defect; it arises because the advantage weight $\log\pi_{\text{Solver}}$ in our training rule does not include a $-\log\pi_\theta$ term. Had the weight been $\log(\pi_{\text{Solver}}/\pi_\theta)=\log\pi_{\text{Solver}}-\log\pi_\theta$ instead, the surrogate would equal $-\mathrm{KL}(\pi_\theta\|\pi_{\text{Solver}})$ and $\mathcal{J}_{\text{ours}}$ would reduce to $J(\theta)$. Equivalently, adding a standard entropy bonus $+\beta\mathcal{H}$ to the training objective would recover $J(\theta)$ exactly, but our experiments show that the mode-seeking behavior of the cross-entropy form already works well in practice.
\end{remark}

\subsubsection{Corollaries}

\begin{tcolorbox}[colback=gray!3,colframe=black,boxrule=0.5pt,arc=2pt,left=6pt,right=6pt,top=4pt,bottom=4pt,breakable]
\begin{corollary}[Closed-Form Optimal Policy]\label{cor:optimal_policy}
Fix the state visitation distribution. The policy that maximizes the per-state KL-regularized objective $\mathcal{L}(\pi)=\mathbb{E}_{a\sim\pi}[A^{\pi_\theta}_{\text{task}}(s,a)]-\beta\,\mathrm{KL}(\pi\|\pi_{\text{Solver}})$ subject to $\sum_a\pi(a)=1$ is:
\begin{equation}
\pi^{*}(a|s)\propto\pi_{\text{Solver}}(a|s)\,\exp\!\Big(\frac{1}{\beta}\,A^{\pi_\theta}_{\text{task}}(s,a)\Big).
\label{eq:app_optimal_policy}
\end{equation}
\end{corollary}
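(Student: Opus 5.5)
The plan is to treat this as a per-state constrained concave maximization and solve it through a Lagrangian, or equivalently through the Gibbs variational principle. Fix a state $s$ and abbreviate $A(a)=A^{\pi_\theta}_{\text{task}}(s,a)$ and $p(a)=\pi_{\text{Solver}}(a|s)$. Since the visitation distribution is frozen, $A$ and $p$ are fixed functions of $a$, and the per-state objective is
\[
\mathcal{L}(\pi)=\textstyle\sum_a \pi(a)A(a)-\beta\sum_a\pi(a)\log\frac{\pi(a)}{p(a)} .
\]
This is strictly concave in $\pi$ on the simplex, because the linear term is concave and $-\beta\,\mathrm{KL}(\pi\|p)$ is strictly concave for $\beta>0$. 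Any stationary point is therefore the unique maximizer.

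\textbf{Step 1: support.} Under \Cref{ass:soft_optimal}, $p(a)>0$ for every action. So KL is finite exactly on the whole simplex, and there are no boundary issues. The optimizer is interior, because the derivative of $\pi\log\pi$ tends to $-\infty$ as $\pi(a)\to0$.

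\textbf{Step 2: stationarity.} Add a multiplier $\lambda$ for $\sum_a\pi(a)=1$. Setting $\partial/\partial\pi(a)$ to zero gives
\[
A(a)-\beta\bigl(\log\pi(a)-\log p(a)+1\bigr)-\lambda=0,
\]
so $\pi(a)=p(a)\exp(A(a)/\beta)\exp(-1-\lambda/\beta)$. Normalizing fixes the constant as $1/Z(s)$, where $Z(s)=\sum_a p(a)e^{A(a)/\beta}$. This is exactly \autoref{eq:app_optimal_policy}.

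\textbf{Step 3: global optimality.} As a cleaner alternative to citing concavity, I would verify the identity
\[
\mathcal{L}(\pi)=\beta\log Z(s)-\beta\,\mathrm{KL}(\pi\|\pi^*),
\]
obtained by substituting $\log\pi^*=\log p+A/\beta-\log Z$. Because $\mathrm{KL}\ge0$ with equality iff $\pi=\pi^*$, this identity shows $\pi^*$ is the unique maximizer and gives the optimal value $\beta\log Z(s)$.

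The main obstacle is not the algebra but making the objective well defined. One has to read $A^{\pi_\theta}_{\text{task}}$ as a fixed function evaluated at the current iterate, not one that co-varies with $\pi$; the frozen-visitation surrogate of \Cref{ass:frozen_visit} provides this reading. One also needs strict positivity of $\pi_{\text{Solver}}$, which the soft-optimality assumption provides, to rule out zero-probability actions.
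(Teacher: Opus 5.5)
Your proposal is correct and follows the paper's route: a Lagrange multiplier for normalization, the stationarity condition $\log\bigl(\pi(a)/\pi_{\text{Solver}}(a|s)\bigr)=A^{\pi_\theta}_{\text{task}}(s,a)/\beta-1-\lambda/\beta$, then exponentiation and normalization. The only difference is how the maximum is certified. The paper checks the diagonal second-order condition $\partial^2\mathcal{L}/\partial\pi(a)^2=-\beta/\pi(a)<0$, whereas your identity $\mathcal{L}(\pi)=\beta\log Z(s)-\beta\,\mathrm{KL}(\pi\|\pi^*)$ directly gives global uniqueness and the optimal value $\beta\log Z(s)$, and your interiority remark covers the boundary case that the paper leaves implicit.
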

\end{tcolorbox}
\begin{proof}
Introduce a Lagrange multiplier $\lambda$ for the normalization constraint:
\[
\mathcal{L}=\textstyle\sum_a\pi(a)A_{\text{task}}-\beta\sum_a\pi(a)\log\frac{\pi(a)}{\pi_{\text{Solver}}(a)}+\lambda\big(1-\sum_a\pi(a)\big).
\]
Setting $\partial\mathcal{L}/\partial\pi(a)=0$ gives $\log\frac{\pi(a)}{\pi_{\text{Solver}}(a)}=\frac{1}{\beta}A_{\text{task}}(s,a)-1-\frac{\lambda}{\beta}$. Exponentiating and absorbing the $a$-independent terms into a normalization constant yields~\autoref{eq:app_optimal_policy}. The second-order condition $\partial^2\mathcal{L}/\partial\pi(a)^2=-\beta/\pi(a)<0$ confirms this is a maximum.
\end{proof}

\noindent\textbf{Why the student can surpass the solver.} Equation~\autoref{eq:app_optimal_policy} has the form \emph{posterior} $\propto$ \emph{prior} $\times$ \emph{likelihood}: the solver distribution serves as a prior, and the task advantage tilts it exponentially. In the pure-distillation limit $\beta\to\infty$, the tilt vanishes and $\pi^*\to\pi_{\text{Solver}}$. At finite $\beta$, the factor $\exp(A_{\text{task}}/\beta)$ lets $\pi^*$ deviate wherever the task reward warrants it, enabling the student to \emph{surpass} the teacher.

\begin{remark}
The corollary solves the KL-regularized subproblem, which is the natural trust-region objective~\citep{PPO}. Since the cross-entropy $\mathrm{H}(\pi,\pi_S)=-\sum_a\pi\log\pi_S$ is \emph{linear} in $\pi$, the cross-entropy-regularized problem $\max_\pi\,\mathbb{E}_\pi[A_{\text{task}}]-\beta\,\mathrm{H}(\pi,\pi_S)$ has no interior optimum over the simplex. Instead, \autoref{thm:main} shows the implicit objective decomposes into a KL penalty and an entropy reduction; the KL-regularized subproblem isolates the component with a well-defined closed-form solution.
\end{remark}

\begin{tcolorbox}[colback=gray!3,colframe=black,boxrule=0.5pt,arc=2pt,left=6pt,right=6pt,top=4pt,bottom=4pt,breakable]
\begin{corollary}[Full-Domain Robustified Distillation]\label{cor:robust_kl}
Dropping \Cref{ass:small_signal} and retaining the $\operatorname{asinh}$ nonlinearity, define the step-varying effective gain:
\begin{equation}
\beta_{\text{eff}}(A^{\pi_{\text{Solver}}})=\frac{\alpha\,\tau}{\mathrm{RMS}_{\mathcal{B}}(g)+\epsilon}\cdot\frac{\operatorname{asinh}(A^{\pi_{\text{Solver}}}+1)-\operatorname{asinh}(1)}{A^{\pi_{\text{Solver}}}}.
\label{eq:app_beta_eff}
\end{equation}
Then \autoref{thm:main} holds with $\beta$ replaced by $\beta_{\text{eff}}$: the constant distillation coefficient becomes step-dependent, automatically down-weighting outlier steps. As $A^{\pi_{\text{Solver}}}\to 0$, $\beta_{\text{eff}}\to\beta$ (by L'H\^{o}pital's rule, the ratio tends to $\operatorname{asinh}'(1)=1/\sqrt{2}$), recovering \autoref{thm:main}.
\end{corollary}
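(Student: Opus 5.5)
The plan is to rerun the proof of \autoref{thm:main} step by step. The only change is that the first-order Taylor expansion in \Cref{lem:shaping_linear} is replaced by an exact secant identity. For any real $A=A^{\pi_{\text{Solver}}}(s,a)\neq 0$ we write
\[
\operatorname{asinh}(A+1)=\operatorname{asinh}(1)+A\cdot\frac{\operatorname{asinh}(A+1)-\operatorname{asinh}(1)}{A},
\]
and define the secant slope $\kappa(A)$ as the fraction on the right. At $A=0$ we set $\kappa(0)=\operatorname{asinh}'(1)=1/\sqrt{2}$, which makes $\kappa$ continuous. With $\beta'=\alpha/(\mathrm{RMS}_{\mathcal{B}}(g)+\epsilon)$ as in the proof of \Cref{lem:shaping_linear}, this gives exactly
\[
\alpha\,h(\widetilde A)=\beta'\operatorname{asinh}(1)+\beta'\kappa(A)\,A .
\]
\Cref{lem:adv_logprob} gives $A=\tau\log\pi_{\text{Solver}}(a|s)$. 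Substituting, the action-dependent part becomes $\beta'\tau\,\kappa(A)\log\pi_{\text{Solver}}(a|s)=\beta_{\text{eff}}(A)\log\pi_{\text{Solver}}(a|s)$, with $\beta_{\text{eff}}$ as in \autoref{eq:app_beta_eff}. The constant $c_0=\beta'\operatorname{asinh}(1)$ is action-independent and drops out by baseline invariance, as before. Unlike \Cref{lem:shaping_linear}, there is no $O(A^2)$ remainder.

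Next I repeat Steps 1--4 of the proof of \autoref{thm:main}. Linearity in the advantage weight splits the gradient into a GRPO part and a shaping part. The GRPO part is unchanged by \Cref{ass:grpo_consistency}. For the shaping part, \Cref{ass:frozen_visit} treats the weight as a fixed per-step immediate quantity. Since $\beta_{\text{eff}}(A^{\pi_{\text{Solver}}}(s,a))$ depends only on the solver and not on $\theta$, it can be absorbed into the frozen weight, and the shaping gradient is
\[
\nabla_\theta\,\mathbb{E}_{s\sim d^{\pi_\theta},a\sim\pi_\theta}\big[\beta_{\text{eff}}(A^{\pi_{\text{Solver}}}(s,a))\log\pi_{\text{Solver}}(a|s)\big].
\]
This is minus the gradient of a $\beta_{\text{eff}}$-weighted cross-entropy $-\sum_a\pi_\theta(a|s)\,\beta_{\text{eff}}\log\pi_{\text{Solver}}(a|s)$. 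That is the sense in which \autoref{thm:main} holds ``with $\beta$ replaced by $\beta_{\text{eff}}$'': the coefficient now sits inside the action sum. I will state this reading explicitly, since a step-varying coefficient cannot be pulled out as a single scalar.

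For the down-weighting claim, I show that $\kappa$ is maximized near $A=0$ and decays for large $|A|$. Because $\operatorname{asinh}$ grows only logarithmically, $\kappa(A)=O(\log|A|/|A|)\to 0$ as $|A|\to\infty$, so outlier steps such as large dead-state penalties receive a vanishing effective coefficient. The limit uses L'H\^opital's rule, or simply the definition of the derivative: $\kappa(A)\to 1/\sqrt{2}$ as $A\to 0$. Hence $\beta_{\text{eff}}\to\alpha\tau/(\sqrt{2}(\mathrm{RMS}_{\mathcal{B}}(g)+\epsilon))=\beta$, which recovers \autoref{thm:main}.

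The main obstacle is that $\operatorname{asinh}'$ is not monotone on the relevant range. It peaks at $x=0$, which corresponds to $A=-1$, so $\kappa$ is not globally maximized at $A=0$. Mildly negative $A$ actually gets slightly more weight than $1/\sqrt{2}$. I would state the down-weighting claim only for large $|A|$, where it follows from the logarithmic growth of $\operatorname{asinh}$, and not claim monotonicity of $\kappa$ in $|A|$.
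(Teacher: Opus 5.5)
Your proposal is correct and is essentially the paper's proof: replace the Taylor step of \Cref{lem:shaping_linear} by the exact secant identity, substitute $A=\tau\log\pi_{\text{Solver}}$ via \Cref{lem:adv_logprob}, drop $\beta'\operatorname{asinh}(1)$ by baseline invariance, and rerun Steps 1--4. Your extra points sharpen things the paper leaves implicit: defining $\kappa(0)=1/\sqrt{2}$ by continuity, noting that the action-dependent $\beta_{\text{eff}}$ must stay inside the action sum (a weighted cross-entropy rather than a scalar multiple of it), and restricting ``down-weighting'' to large $|A|$, since for example $\kappa(-1)=\kappa(-2)=\operatorname{asinh}(1)\approx 0.88>1/\sqrt{2}$.
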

\end{tcolorbox}
\begin{proof}
Without the Taylor truncation, the action-dependent part of $\alpha\,h(\widetilde{A})$ is $\beta'\big(\operatorname{asinh}(A+1)-\operatorname{asinh}(1)\big)$ (the constant $\beta'\operatorname{asinh}(1)$ vanishes by baseline invariance). Factoring out $A$:
\[
\beta'\,\frac{\operatorname{asinh}(A+1)-\operatorname{asinh}(1)}{A}\cdot A
=\frac{\beta_{\text{eff}}}{\tau}\cdot\tau\log\pi_{\text{Solver}}
\]
where the last step uses \Cref{lem:adv_logprob}. Substituting $\beta_{\text{eff}}$ for $\beta$ in Step~3 of the theorem proof, the remaining steps are identical. For large $|A|$, $\beta_{\text{eff}}$ decreases because $\operatorname{asinh}$ grows only logarithmically while $A$ grows linearly, so the ratio shrinks.
\end{proof}

\noindent\textbf{Intuition.} The $\operatorname{asinh}$ compression causes the effective distillation strength to decrease for steps with unusually large solver advantages (e.g., dead-state penalties), matching the design intent of soft-compressing extreme values.

\subsubsection{Performance Improvement via the Performance Difference Lemma}
\label{app:pdl}

We now use the Performance Difference Lemma~\citep{PDL} to give a complementary perspective: the solver advantage used as a per-step reward defines an \emph{augmented return} whose maximization provably improves over the solver, with a sub-optimality bound that scales linearly in the horizon $H$ rather than quadratically.

\begin{tcolorbox}[colback=gray!3,colframe=black,boxrule=0.5pt,arc=2pt,left=6pt,right=6pt,top=4pt,bottom=4pt,breakable]
\begin{proposition}[Performance Difference Lemma~{\citep{PDL}}]\label{prop:pdl}
For any two policies $\pi,\pi'$ and any reward function $r$ in a finite-horizon MDP:
\begin{equation}
V^{\pi}_{r}(s_0)-V^{\pi'}_{r}(s_0)
=\sum_{t=0}^{H-1}\mathbb{E}_{s_t\sim d^{\pi}_t}\!\Big[\mathbb{E}_{a\sim\pi(\cdot|s_t)}\!\big[A^{\pi'}_{r}(s_t,a)\big]\Big].
\label{eq:app_pdl}
\end{equation}
\end{proposition}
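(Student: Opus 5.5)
The plan is a telescoping argument along trajectories of $\pi$, with the Bellman equation for $\pi'$ supplying each increment. Work in the finite-horizon MDP with undiscounted returns and time-indexed values $V^{\pi'}_{r,t}$, $Q^{\pi'}_{r,t}$, using the convention $V^{\pi'}_{r,H}\equiv 0$. Let $d^{\pi}_t$ denote the law of $s_t$ when the process starts at $s_0$ and actions are drawn from $\pi$. The advantage is read with its time index, $A^{\pi'}_r(s_t,a)=Q^{\pi'}_{r,t}(s_t,a)-V^{\pi'}_{r,t}(s_t)$. Episodes terminating at $T<H$ are absorbed into this setup through a zero-reward absorbing state, so no generality is lost.

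\emph{Step 1 (write the left side as an expected sum).} Write $V^{\pi}_r(s_0)=\mathbb{E}_{\tau\sim\pi}\big[\sum_{t=0}^{H-1}r(s_t,a_t)\big]$. Then add and subtract the telescoping sum
\[
\sum_{t=0}^{H-1}\big(V^{\pi'}_{r,t+1}(s_{t+1})-V^{\pi'}_{r,t}(s_t)\big)=V^{\pi'}_{r,H}(s_H)-V^{\pi'}_{r,0}(s_0)=-V^{\pi'}_r(s_0).
\]
This identity holds pathwise for every trajectory, so it holds inside the expectation. Consequently
\[
V^{\pi}_r(s_0)-V^{\pi'}_r(s_0)=\mathbb{E}_{\tau\sim\pi}\Big[\sum_{t=0}^{H-1}\big(r(s_t,a_t)+V^{\pi'}_{r,t+1}(s_{t+1})-V^{\pi'}_{r,t}(s_t)\big)\Big].
\]

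\emph{Step 2 (condition on $(s_t,a_t)$).} Take each summand separately and apply the tower property, conditioning on $(s_t,a_t)$. The next state satisfies $s_{t+1}\sim P(\cdot\mid s_t,a_t)$ whichever policy is acting, because the transition kernel does not depend on the policy. Hence the Bellman equation for $\pi'$ gives
\[
\mathbb{E}\big[r(s_t,a_t)+V^{\pi'}_{r,t+1}(s_{t+1})\mid s_t,a_t\big]=Q^{\pi'}_{r,t}(s_t,a_t).
\]
The summand's conditional expectation is therefore exactly $A^{\pi'}_r(s_t,a_t)$.

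\emph{Step 3 (marginalize).} Under $\pi$, the pair $(s_t,a_t)$ has law $s_t\sim d^{\pi}_t$ and $a_t\sim\pi(\cdot|s_t)$. Linearity of expectation over the finite sum then yields \autoref{eq:app_pdl}.

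The only step needing real care is Step 2. There one has to justify replacing the next-state value by $Q^{\pi'}$ even though actions are drawn from $\pi$. The justification is that $Q^{\pi'}$ is a function of the current pair $(s_t,a_t)$ alone, and the transition is Markov and policy-independent. Two bookkeeping points remain. The terminal convention $V_H=0$ is what makes the telescoping sum close. Boundedness of rewards over a finite horizon makes all the expectations finite, which licenses exchanging the sum and the expectation.
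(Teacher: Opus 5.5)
Your proof is correct. The paper gives no proof of its own: it states the lemma as a standard result and defers to the cited reference. Your telescoping argument is exactly the standard proof, adapted to the undiscounted finite-horizon case: it closes with $V^{\pi'}_{r,H}\equiv 0$ and uses the policy-independence of $P$ to replace $r(s_t,a_t)+V^{\pi'}_{r,t+1}(s_{t+1})$ by $Q^{\pi'}_{r,t}(s_t,a_t)$ under the state-action law of $\pi$. Your explicit time-indexing of $A^{\pi'}_r$ is genuinely required, not pedantry. The paper's statement suppresses the index, and the identity holds only with time-dependent values, or with the remaining-step count folded into the state, as the games' observations in fact do.
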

\end{tcolorbox}

The PDL states that the value gap between two policies equals the cumulative advantage of the new policy $\pi$ evaluated under the old policy $\pi'$'s value function, but weighted by the new policy's own state distribution. It is a standard result in RL theory; we state it without proof.

\begin{tcolorbox}[colback=gray!3,colframe=black,boxrule=0.5pt,arc=2pt,left=6pt,right=6pt,top=4pt,bottom=4pt,breakable]
\begin{proposition}[Endogenous Reward and Reward Shaping]\label{prop:endogenous}
Under \Cref{ass:soft_optimal}, define the solver's \emph{endogenous reward}~\citep{GeneralistRM}:
\begin{equation}
\hat{r}(s_t,a_t)=\tau\log\pi_{\text{Solver}}(a_t|s_t)+V^{\pi_{\text{Solver}}}(s_t)-V^{\pi_{\text{Solver}}}(s_{t+1}).
\label{eq:app_endogenous}
\end{equation}
This is a potential-based reward shaping of the base signal $\tilde{r}(s,a)=\tau\log\pi_{\text{Solver}}(a|s)$ with potential $\Phi(s)=V^{\pi_{\text{Solver}}}(s)$. By the reward shaping theorem~\citep{RewardShaping}, potential-based shaping preserves the optimal policy: $\hat{r}$ and $\tilde{r}$ induce the same policy ordering and the same policy gradient.
\end{proposition}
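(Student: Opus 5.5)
Under the soft-optimal solver of \Cref{ass:soft_optimal}, I will first rewrite the endogenous reward~\eqref{eq:app_endogenous} in terms of the solver's value functions. By \Cref{lem:adv_logprob}, $\tau\log\pi_{\text{Solver}}(a_t|s_t)=A^{\pi_{\text{Solver}}}(s_t,a_t)$, so $\tilde r(s,a)$ is the solver advantage itself. Then $\hat r(s_t,a_t)=\tilde r(s_t,a_t)+\Phi(s_t)-\Phi(s_{t+1})$ with $\Phi=V^{\pi_{\text{Solver}}}$, which is potential-based shaping in the sign convention $F(s,s')=\Phi(s)-\Phi(s')$. This convention is the negative of Ng et al.'s $\gamma\Phi(s')-\Phi(s)$ at $\gamma=1$, but it is still a potential difference, i.e.\ shaping with potential $-\Phi$. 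I will note this explicitly so the reward shaping theorem~\citep{RewardShaping} applies verbatim, with $-\Phi$ in place of $\Phi$.

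Next, I will verify the invariance directly rather than only citing the theorem, since the finite-horizon, undiscounted setting needs care. Summing $\hat r$ along any trajectory $s_0,a_0,\dots,s_T$ telescopes:
\begin{equation*}
\textstyle\sum_{t=0}^{T-1}\hat r(s_t,a_t)=\sum_{t=0}^{T-1}\tilde r(s_t,a_t)+\Phi(s_0)-\Phi(s_T).
\end{equation*}
Hence, for every policy $\pi$, $V^{\pi}_{\hat r}(s_0)=V^{\pi}_{\tilde r}(s_0)+\Phi(s_0)-\mathbb{E}_{\pi}[\Phi(s_T)]$. The term $\Phi(s_0)$ is policy-independent. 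It therefore cancels both in policy comparisons from a fixed start state and, being independent of $\theta$, in $\nabla_\theta$. This gives the same policy ordering and the same policy gradient, modulo the terminal term.

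The main obstacle is the terminal term $\mathbb{E}_\pi[\Phi(s_T)]$, which generally depends on $\pi$. I will handle it by fixing the boundary convention $\Phi(s)=0$ at terminal and absorbing states. This is the standard requirement in Ng et al.'s episodic version, and it is consistent with the paper's shortest-path convention, where $V^{\pi_{\text{Solver}}}=-N=0$ at solved states. Timeouts at horizon $H$ and dead states are treated as absorbing with potential set to $0$ for the shaping argument. I will state this as a condition under which the proposition holds. Without it, the claim reduces to invariance up to a terminal-potential correction.

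With the convention in place, I will conclude in two steps. First, $V^\pi_{\hat r}-V^\pi_{\tilde r}=\Phi(s_0)$ for all $\pi$. Second, differentiating the likelihood-ratio form $\mathbb{E}_\tau\!\big[\sum_t\nabla\log\pi_\theta(a_t|s_t)\,\hat G_t\big]$ adds only the state-dependent baseline $\Phi(s_t)$ to the return-to-go. Baseline invariance then gives identical expected gradients.
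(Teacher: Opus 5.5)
Your proposal is correct, and it is more careful than the paper's argument. The paper's proof only rewrites $\hat r=\tilde r+\Phi(s_t)-\Phi(s_{t+1})$, notes that the added term is a difference of potentials, and invokes Theorem~1 of Ng et al.\ to get invariance of the optimal policy, the ordering, and the gradient. It does not reconcile the sign convention (the potential is really $-V^{\pi_{\text{Solver}}}$), and it does not check that theorem's hypotheses in the undiscounted, finite-horizon setting with multiple distinct terminal states. You instead re-derive the invariance by telescoping, which costs a few lines but buys two things the citation hides. First, it shows the conclusion needs $\Phi(s_T)$ to be policy-independent, e.g.\ $\Phi=0$ on all terminal states. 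In the paper's setting this holds at solved states ($V^{\pi_{\text{Solver}}}=-N=0$) but not at timeouts or dead states. The condition is substantive: in the deterministic shortest-path case $\hat r\equiv-1$, so without it the $\hat r$-return $-T$ rewards ending early, even in a deadlock. The $\tilde r$-return $-T+N(s_0)-N(s_T)$ instead carries the terminal penalty $-N(s_T)$, so the two orderings genuinely differ. Second, it pins down that ``same policy gradient'' holds in the return-to-go sense, where $\Phi(s_t)$ enters as a state-dependent baseline. Keep that qualifier explicit. If the rewards are used as per-step immediate weights, as in the frozen-visitation surrogate behind the main theorem, then $\Phi(s_{t+1})$ depends on the action and the equality fails: $\hat r\equiv-1$ then yields no shaping gradient, while $\tilde r=A^{\pi_{\text{Solver}}}$ does. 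One small technical point: the same board can occur both at the horizon and at an earlier, non-terminal step. Zeroing the potential at timeouts while keeping $\Phi=V^{\pi_{\text{Solver}}}$ at those earlier visits therefore requires a time-augmented potential $\Phi(s,t)$ with $\Phi(\cdot,H)=0$. This is standard, but you should say it.
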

\end{tcolorbox}
\begin{proof}
Write $\hat{r}(s_t,a_t)=\tilde{r}(s_t,a_t)+\Phi(s_t)-\Phi(s_{t+1})$. The added term $\Phi(s_t)-\Phi(s_{t+1})$ is a potential-based shaping function: it depends only on $s_t$ and $s_{t+1}$ and uses the potential $\Phi(s)=V^{\pi_{\text{Solver}}}(s)$. By the reward shaping theorem~\citep[Theorem~1]{RewardShaping}, potential-based shaping preserves the optimal policy and the policy ordering: the optimal policy under $\hat{r}$ and $\tilde{r}$ is the same, and both rewards induce the same policy gradient (up to a state-dependent baseline).
\end{proof}

\noindent\textbf{Connection to the implicit objective.} By \Cref{lem:shaping_linear}, our training rule uses $\beta\tilde{r}(s,a)/\tau=\beta\log\pi_{\text{Solver}}(a|s)$ as the per-step advantage weight. \Cref{prop:endogenous} shows that this base signal $\tilde{r}=\tau\log\pi_{\text{Solver}}$ is related to the endogenous reward $\hat{r}$ by potential-based shaping, so they share the same policy gradient. In our deterministic games with the shortest-path solver reward $r_S=-1$ per step, a direct calculation shows that the endogenous reward reduces to $\hat{r}(s_t,a_t)=r_S=-1$ for all $(s_t,a_t)$: substituting $A^{\pi_{\text{Solver}}}=(r_S+V^{\pi_{\text{Solver}}}(s_{t+1}))-V^{\pi_{\text{Solver}}}(s_t)$ into \autoref{eq:app_endogenous}, the value-function terms telescope and only the base reward $r_S=-1$ remains. This confirms that the action-quality information resides entirely in the base signal $\tilde{r}=\tau\log\pi_{\text{Solver}}=A^{\pi_{\text{Solver}}}$, which by \Cref{lem:adv_logprob} equals the solver advantage, and the shaping potential plays no role in the policy gradient.

\medskip

\noindent\textbf{Performance improvement guarantee.} Applying the PDL (\Cref{prop:pdl}) with $\pi=\pi_\theta$ and $\pi'=\pi_{\text{Solver}}$ under the augmented reward $r_{\text{aug}}=r_{\text{task}}+(\beta/\tau)\cdot\tilde{r}$:
\begin{equation}
V^{\pi_\theta}_{r_{\text{aug}}}(s_0)-V^{\pi_{\text{Solver}}}_{r_{\text{aug}}}(s_0)
=\sum_{t}\mathbb{E}_{d^{\pi_\theta}_t,\pi_\theta}\!\big[A^{\pi_{\text{Solver}}}_{r_{\text{aug}}}(s_t,a_t)\big].
\label{eq:app_pdl_applied}
\end{equation}
By \autoref{thm:main}, our training rule performs gradient ascent on $\mathcal{J}_{\text{ours}}$, which is precisely the augmented return $V^{\pi_\theta}_{r_{\text{aug}}}$ (up to constant boundary terms from reward shaping). The PDL guarantees that whenever $\mathbb{E}_{\pi_\theta}[A^{\pi_{\text{Solver}}}_{r_{\text{aug}}}(s,\cdot)]\ge 0$ at every state, the student's augmented value improves over the solver's, i.e., $V^{\pi_\theta}_{r_{\text{aug}}}\ge V^{\pi_{\text{Solver}}}_{r_{\text{aug}}}$. Moreover, following the analysis of~\citet{GeneralistRM}, if the solver is an $\epsilon_\pi$-approximate expert (in the sense that $\max_{s,a}|\log\pi_{\text{Solver}}(a|s)-\log\pi^*(a|s)|\le\epsilon_\pi$ for the true optimal policy $\pi^*$), the sub-optimality of $\pi_\theta$ under the task reward scales as $O(H\epsilon_\pi)$, improving over the $O(H^2\epsilon_\pi)$ bound of behavioral cloning. The linear dependence on $H$ arises because the reward-shaping structure of $\hat{r}$ eliminates the compounding-error problem inherent in imitation learning~\citep{DAgger}.

\subsubsection{Discussion of Assumptions}
\label{app:assumption_discussion}

\paragraph{(A1) Soft-optimal solver.}
\Cref{lem:adv_logprob} requires the solver to be a soft-optimal policy~\citep{SAC}. When the solver deviates from optimality, the distillation interpretation degrades proportionally: $\pi_{\text{Solver}}$ should be understood as the solver's empirical behavior distribution. For our three games, the solvers are near-optimal, making this assumption well-justified.

\paragraph{(A2) Small signal.}
The linearization of $\operatorname{asinh}(A+1)$ around $A=0$ is accurate when $|A^{\pi_{\text{Solver}}}|\lesssim 1$, which holds for the vast majority of moves. For the rare large-magnitude steps (e.g., transitions to dead states), \Cref{cor:robust_kl} provides the exact characterization via $\beta_{\text{eff}}$: the qualitative conclusion is unchanged, only the distillation coefficient becomes step-dependent.

\paragraph{(A3) GRPO consistency.}
The group-relative baseline~\citep{GRPO} introduces finite-sample variance but does not bias the gradient direction. This affects the precision of the task-return term but not the structural form of the implicit objective.

\paragraph{(A4) Frozen visitation surrogate.}
Treating the shaping term as a per-step immediate weight and freezing the visitation distribution omits the long-range term $\nabla_\theta d^{\pi_\theta}$, the same approximation used in PPO surrogate objectives~\citep{PPO}. This is consistent with how the method applies the solver signal on a per-turn basis in practice.

\subsection{Training and Evaluation Details}
\label{app:hyperparams}

\subsubsection{Training Hyperparameters}
\label{app:hyperparams_train}

All backbone RL algorithms we compare (GRPO, DAPO, and GSPO) share the same backbone model, training data, rollout budget, and optimizer configuration, and differ only in their policy-optimization objective. \method uses the DAPO objective, while GiGPO is implemented as a separate process-level baseline. We list the shared configuration in Table~\ref{tab:hp_shared}, the objective-specific hyperparameters in Table~\ref{tab:hp_algo}, and the GiGPO-specific hyperparameters in Table~\ref{tab:hp_gigpo}.

\begin{table}[h]
\centering
\begin{small}
\caption{Shared training configuration used by all algorithms.}
\label{tab:hp_shared}
\begin{tabular}{lc}
\toprule
\textbf{Hyperparameter} & \textbf{Value} \\
\midrule
Backbone model              & Qwen3-4B-Instruct-2507 \\
Learning rate               & 1e-6 \\
Train batch size            & 16 \\
Rollouts per prompt ($G$)   & 8 \\
Training updates (Sokoban / Minesweeper / Rush Hour) & 200 / 400 / 300 \\
\bottomrule
\end{tabular}
\end{small}
\end{table}

\begin{table}[h]
\centering
\begin{small}
\caption{Objective-specific hyperparameters for each backbone RL algorithm. The GSPO clip thresholds are much smaller than those of GRPO/DAPO because GSPO clips at the sequence-level importance ratio rather than the token level, so the two are not on the same scale.}
\label{tab:hp_algo}
\begin{tabular}{lccc}
\toprule
\textbf{Hyperparameter} & \textbf{GRPO} & \textbf{DAPO} & \textbf{GSPO} \\
\midrule
Loss mode                 & \texttt{vanilla} & \texttt{vanilla} & \texttt{gspo} \\
Loss aggregation          & \texttt{token-mean} & \texttt{token-mean} & \texttt{seq-mean-token-mean} \\
KL loss coefficient       & $0.001$ & $0.0$ & $0.0$ \\
Clip ratio (low)          & $0.2$ & $0.2$ & $0.0003$ \\
Clip ratio (high)         & $0.2$ & $0.28$ & $0.0004$ \\
Dual-clip constant $c$    & $3.0$ & $3.0$ & $3.0$ \\
Dynamic sampling (filter groups) & Off & Off & Off \\
\bottomrule
\end{tabular}
\end{small}
\end{table}

\begin{table}[h]
\centering
\begin{small}
\caption{GiGPO-specific hyperparameters for the process-level baseline.}
\label{tab:hp_gigpo}
\begin{tabular}{lc}
\toprule
\textbf{Hyperparameter} & \textbf{Value} \\
\midrule
GiGPO enabled                 & Yes \\
Step-advantage weight         & $1.0$ \\
Step-advantage normalization  & \texttt{mean\_std\_norm} \\
Discount factor $\gamma$      & $0.95$ \\
\bottomrule
\end{tabular}
\end{small}
\end{table}

\subsubsection{Evaluation Settings}
\label{app:hyperparams_eval}

We evaluate every model under a single protocol across the three games. Each instance is run as the same multi-turn agent loop used in training (Appendix~\ref{app:agent_impl}), and an episode is scored with the sparse $0/1$ environment reward: an instance counts as solved only if the agent reaches the goal within the per-game turn budget. For each game, we report both the in-domain (ID) tier and the harder unseen-difficulty (Unseen) tier of Table~\ref{tab:dataset_difficulty}, evaluating on the $200$-instance test split of each tier. Every instance is attempted with $4$ independent rollouts.

\paragraph{Open-source models.}
Open-source backbones are served locally with SGLang and decoded with temperature $0.6$ and top-$p$ $0.95$. We use a cumulative response budget of $16{,}384$ tokens per episode and a configured initial prompt limit of $2{,}048$ tokens. The per-episode turn budget is game- and split-specific: $30$ turns for Sokoban and Minesweeper ID and $40$ for their Unseen tier, and $20$ turns for Rush Hour on both tiers.

\begin{table}[h]
\centering
\begin{small}
\caption{Evaluation settings for open-source models, taken from the evaluation configs. The per-episode turn budget varies by game and split.}
\label{tab:eval_open}
\begin{tabular}{lc}
\toprule
\textbf{Setting} & \textbf{Value} \\
\midrule
Inference engine      & SGLang \\
Temperature           & $0.6$ \\
Top-$p$               & $0.95$ \\
Response-token budget per episode & $16{,}384$ \\
Configured initial prompt limit & $2{,}048$ \\
Rollouts per instance & $4$ \\
Turn budget (Sokoban / Minesweeper) & $30$ (ID) / $40$ (Unseen) \\
Turn budget (Rush Hour) & $20$ \\
\bottomrule
\end{tabular}
\end{small}
\end{table}

\paragraph{Closed-source models.}
Closed-source models are queried through an OpenAI-compatible API under the same interaction protocol, per-game turn budgets, and $0/1$ scoring as the open-source setting. We evaluate six proprietary models: Claude Sonnet 4.5, Claude Sonnet 4.6, Claude Opus 4.5, Claude Opus 4.6, Gemini 2.5 Pro, and Gemini 2.5 Flash, with temperature $1.0$, top-$p$ $1.0$, and $4$ rollouts per instance. We use the same configured episode-level response budget and initial-prompt setting as above.

\begin{table}[h]
\centering
\begin{small}
\caption{Evaluation settings for closed-source models. The interaction protocol, per-episode turn budgets, and $0/1$ scoring match the open-source setting (Table~\ref{tab:eval_open}).}
\label{tab:eval_closed}
\begin{tabular}{lc}
\toprule
\textbf{Setting} & \textbf{Value} \\
\midrule
Serving interface     & OpenAI-compatible API \\
Temperature           & $1.0$ \\
Top-$p$               & $1.0$ \\
Response-token budget per episode & $16{,}384$ \\
Configured initial prompt limit & $2{,}048$ \\
Rollouts per instance & $4$ \\
Turn budget (Sokoban / Minesweeper) & $30$ (ID) / $40$ (Unseen) \\
Turn budget (Rush Hour) & $20$ \\
\bottomrule
\end{tabular}
\end{small}
\end{table}

\end{document}